\newlength{\continueindent}
\newcommand*{\ALG@customparshape}{\parshape 2 \leftmargin \linewidth \dimexpr\ALG@tlm+\continueindent\relax \dimexpr\linewidth+\leftmargin-\ALG@tlm-\continueindent\relax}
\apptocmd{\ALG@beginblock}{\ALG@customparshape}{}{\errmessage{failed to patch}}
\newtheorem{prop}{Proposition}
\newcommand{\MyMapTemplatePrefixc}[4]{\expandafter#1\csname#3#4\endcsname{#2{#4}}} % it remembles a template: \#3#4 --> #2{#4}
\newcommand{\MyMapTemplatePrefixtb}[5]{\expandafter#1\csname#4#5\endcsname{#2{#3{#5}}}} % it remembles a template: \#3#4 --> #2{#4}
\newcommand{\MyMapTemplateNoPrefix}[3]{\expandafter#1\csname#3\endcsname{#2{#3}}}
\def\bt{\bm{\theta}}
\def\Pr{{P}}
\def\ie{\emph{i.e.}\@\xspace}
\def\eg{\emph{e.g.}\@\xspace}
\definecolor{rowblue}{RGB}{220,230,240}
\begin{document}

\title{Self-Paced Collaborative and Adversarial Network for Unsupervised Domain Adaptation}
% make the title area

\author{Weichen Zhang, Dong Xu,~\IEEEmembership{Fellow,~IEEE}, Wanli Ouyang,~\IEEEmembership{Senior Member,~IEEE}, and Wen Li}

\maketitle

% The paper headers
% \markboth{IEEE TRANSACTIONS ON PATTERN ANALYSIS AND MACHINE INTELLIGENCE ,~Vol.XX, No.XX, XXXX XXXX}%
% {Shell \MakeLowercase{\textit{et al.}}: Bare Advanced Demo of IEEEtran.cls for IEEE Computer Society Journals}

% \IEEEtitleabstractindextext{%
\begin{abstract}
This paper proposes a new unsupervised domain adaptation approach called Collaborative and Adversarial Network (CAN), which uses the domain-collaborative and domain-adversarial learning strategy for training the neural network. 
The domain-collaborative learning strategy aims to learn domain specific feature representation to preserve the discriminability for the target domain, while the domain adversarial learning strategy aims to learn domain invariant feature representation to reduce the domain distribution mismatch between the source and target domains. We show that these two learning strategies can be uniformly formulated as domain classifier learning with positive or negative weights on the losses.  We then design a collaborative and adversarial training scheme, which automatically learns domain specific representations from lower blocks in CNNs through collaborative learning and domain invariant representations from higher blocks through adversarial learning. Moreover, to further enhance the discriminability in the target domain, we propose Self-Paced CAN (SPCAN), which progressively selects pseudo-labeled target samples for re-training the classifiers. We employ a self-paced learning strategy such that we can select pseudo-labeled target samples in an easy-to-hard fashion. Additionally, we build upon the popular two stream approach and extend our domain adaptation approach for more challenging video action recognition task, which additionally considers the cooperation between the RGB stream and the optical flow stream. We propose the Two-stream SPCAN (TS-SPCAN) method to select and reweigh the pseudo labeled target samples of one stream (RGB/Flow) based on the information from another stream (Flow/RGB) in a cooperative way. As a result, our TS-SPCAN model is able to exchange the information between the two streams. 
Comprehensive experiments on different benchmark datasets, Office-31, ImageCLEF-DA and VISDA-2017 for the object recognition task, and UCF101-10 and HMDB51-10 for the video action recognition task, show our newly proposed approaches achieve the state-of-the-art performance, which clearly demonstrates the effectiveness of our proposed approaches for unsupervised domain adaptation.

\end{abstract}

% Note that keywords are not normally used for peerreview papers.
\begin{IEEEkeywords}
domain adaptation, transfer learning, deep learning, adversarial learning, self-paced learning.
\end{IEEEkeywords}

% ~\IEEEmembership{Member,~IEEE,}
        % John~Doe,~\IEEEmembership{Fellow,~OSA,}
        % and~Jane~Doe,~\IEEEmembership{Life~Fellow,~IEEE}% <-this % stops a space

% \IEEEcompsocitemizethanks{\IEEEcompsocthanksitem Weichen Zhang, Dong Xu and Wanli Ouyang are with the School of Electrical and Information Engineering, The University of Sydney, NSW, Australia. Dong Xu is the corresponding author. \protect\\
% % note need leading \protect in front of \\ to get a newline within \thanks as
% % \\ is fragile and will error, could use \hfil\break instead.
% E-mail: weichen.zhang@sydney.edu.au, dong.xu@sydney.edu.au, wanli.ouyang@sydney.edu.au.\hfil\break
% \IEEEcompsocthanksitem Wen Li is with the Computer Vision Laboratory, ETH Z̈urich, Switzerlan. \hfil\break
% E-mail: liwen@vison.ee.ethz.ch
% }

% \IEEEpeerreviewmaketitle
\IEEEdisplaynontitleabstractindextext

\ifCLASSOPTIONcompsoc
\IEEEraisesectionheading{\section{Introduction}\label{sec:introduction}}
\else
\section{Introduction}
\label{sec:introduction}
\fi

% \IEEEPARstart{I}{n} many visual recognition tasks, the training data used to learn a model and the testing data on which the model is applied often have different distributions. 
\IEEEPARstart{I}{n} many visual recognition tasks, the training data and the testing data often have different distributions. 
In order to enhance the generalization capability of the models learnt from training data to the testing data, many domain adaptation technologies were proposed \cite{duan2012domainkernel, baktashmotlagh2013unsupervised, duan2012domain, fernando2013unsupervised, duan2012visual, li2014learning, li2018visual} for different visual tasks, such as object recognition, video event recognition, and semantic segmentation by explicitly reducing the data distribution mismatch between the training samples in the source domain and the testing samples in the target domain.

\begin{figure}[H]
\begin{center}
\includegraphics[width=\linewidth]{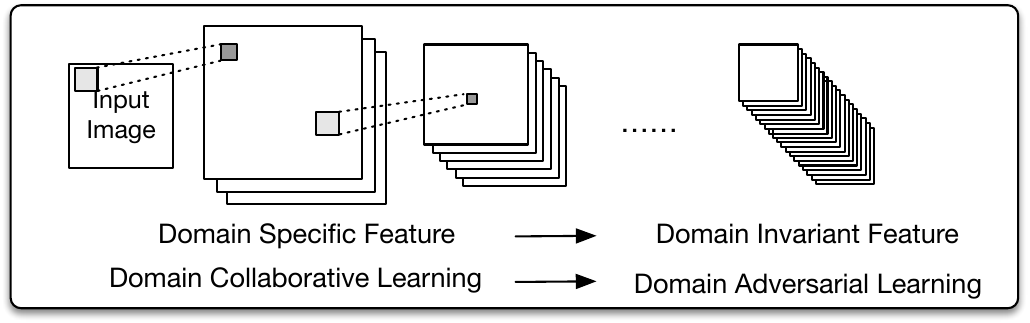}
\vspace{-7mm}
\end{center}
% \caption{Motivation of CAN. It is beneficial to learn domain informative features at lower layers such as corners and edges, which are useful for distinguishing not only images from different domains but also images from different classes, while domain uninformative features are useful for domain adaptation. }
\caption{Motivation of our CAN. We aim to learn domain specific features at lower layers and domain invariant features at higher layers, such that the learnt feature representation is domain invariant and the discriminability of features can also be well preserved.}
\label{fig:motivation_CAN}
\vspace{-4mm}
\end{figure}

The advancement of deep learning methods \cite{szegedy2015going, he2016deep, long2015fully, simonyan2014two, he2017mask}
% \wenli{please complete the reference. There are other missing reference below, please have a check all through the paper.}
have driven significant progress in a wide variety of computer vision tasks. However, modern deep learning methods are often based on the availability of large volumes of labeled data. To overcome this data limitation difficulty, deep transfer learning methods \cite{hu2015deep, long2015learning, long2016unsupervised, long2017jan, ganin2015unsupervised, ganin2016domain, tzeng2017adversarial, zeng2014deep, sun2016deep, bousmalis2016domain, liu2016coupled, kang2018deep, long2018conditional} are commonly used to learn domain invariant features by transferring the knowledge from the source domain to the target domain, in which the labeled samples are limited or even do not exist. Recently, a few deep transfer learning methods were proposed, which achieve promising results. These deep transfer learning approaches can be roughly categorized as statistical approaches \cite{long2015learning, long2016unsupervised, sun2016deep, long2018conditional}, which exploit regularizers, such as Maximum Mean Discrepancy (MMD) \cite{gretton2012kernel}, and adversarial learning based approaches \cite{bousmalis2016domain,ganin2015unsupervised,ganin2016domain,liu2016coupled,tzeng2017adversarial,kang2018deep}, which learn new representations through the adversarial learning processes. To learn domain-invariant representations for different domains, the recent work Domain Adversarial Training of Neural Network (DANN) \cite{ganin2016domain} added a domain classifier and used reversed gradient layer to update the feature extractor shared with the image/video classifier.  Please refer to Section 2 for a brief review of the existing domain adaptation approaches. More details are provided in our recent survey paper \cite{zhang2019recent}.
 
% For iCAN, we iteratively select a small amount of enlarged pseudo-labeled target samples that have high prediction confidence from the image classifier, and are also considered as domain uninformative based on the last domain classifier from the previous training epoch to better learn the target features with these more reliable pseudo-labels.

% We individually select different pseudo-labeled target samples for image classifier part and domain classifiers part respectively, to help each part better learn their own representation.

% \begin{figure}[H]
% \begin{center}
% \includegraphics[width=\linewidth]{TPAMI_SPCAN.png}
% \end{center}
%   \caption{Motivation of SPCAN. In iCAN, we select a small amount of high-confident samples to help learning the target representation. However, most of the samples are not selected. In SPCAN, we want to gradually select more lower-confident samples to affect the representation learning with giving them lower weight. So for SPCAN, inspired from Self-paced Learning, we iteratively select more pseudo-labeled target samples from easy to hard to gradually learn better target specific features for image classifier part, and better align the distributions of the features and categories across different domains for domain classifiers part. We individually select different pseudo-labeled target samples for image classifier part and domain classifiers part respectively, to help each part better learn their own representation. }
% \label{fig:motivation_SPCAN}
% \end{figure}

In this paper, we propose new unsupervised domain adaptation methods, in which all the source domain samples are fully annotated and none of the target domain sample is annotated. We apply our approaches in two common visual recognition tasks, object recognition in images and action recognition in videos. 

For the object recognition task, we propose a new deep transfer learning method called Collaborative and Adversarial Network (CAN), which integrates a set of domain classifiers (also called as domain discriminators) into multiple blocks. Each block consists of several CNN layers and each domain classifier is connected to one block.
As shown in Fig. \ref{fig:motivation_CAN}, our method CAN is based on the motivation
that some characteristic information from target domain
data may be lost after learning domain-invariant features
with the recent method DANN \cite{ganin2016domain}. 
Meanwhile, the representations at lower blocks (shallow layers) are often low-level features, such as corners and edges, which are expected to be specific for distinguishing images from different domains. As a result, we use collaborative and adversarial learning to learn discriminant features including domain-specific features in shallow layers and domain-invariant features in deep layers to reduce the data distribution mismatch. 
\begin{figure}[H]
\begin{center}
\includegraphics[width=\linewidth]{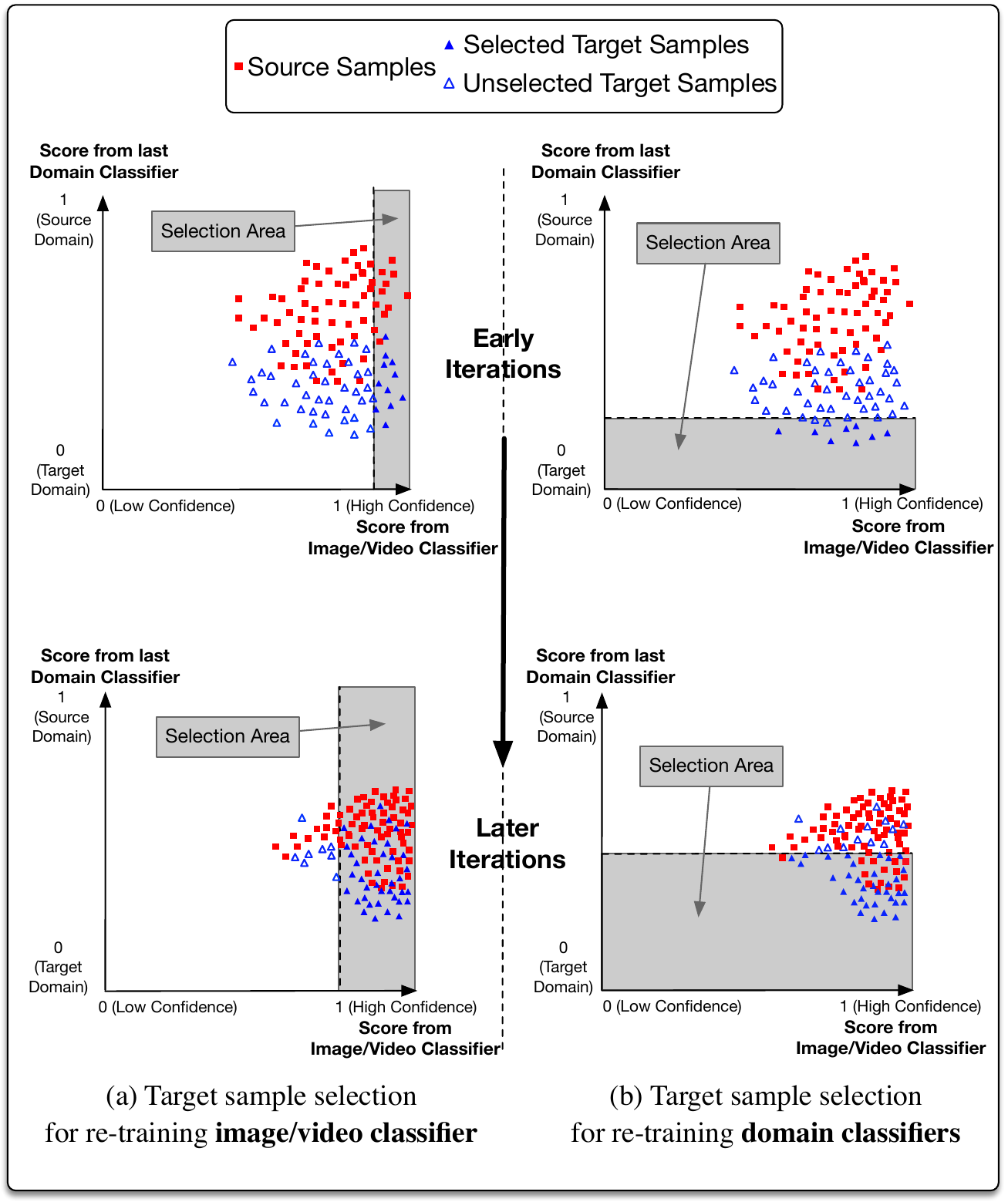}
\vspace{-6mm}
\end{center}
   \caption{Motivation of our Self-Paced CAN (SPCAN). In order to enhance the discriminability of the learnt model and inspired by Self-paced Learning, we re-train the CAN model by iteratively selecting pseudo-labeled target samples in an easy-to-hard learning scheme for both image/video classifier (a) and domain classifier (b). Based on the image/video classifier, our SPCAN gradually selects from fewer target samples to more target samples with high confidence scores. Based on the domain classifier, our SPCAN gradually selects from fewer target samples to more target samples that are likely from the target domain.}
\vspace{-2mm}
\label{fig:motivation}
\end{figure} 
Inspired by Self-Paced learning \cite{kumar2010self}, we additionally propose SPCAN based on an easy-to-hard progressive learning scheme. SPCAN selects pseudo-labeled target samples starting from few pseudo-labeled target samples with high confidence scores and gradually selects more pseudo-labeled target samples. SPCAN aims at, (1) gradually improving the image classification accuracy; and (2) gradually improving the domain classifier and reducing the data distribution mismatch between the source and  target domains. For the first goal (see Fig.  \ref{fig:motivation}(a)), the features and image/video classifier of SPCAN are additionally trained by adding a small amount of pseudo-labeled target sample with high confidence scores at the beginning, and gradually adding more samples in the subsequent iterations. For the second goal, as shown in Fig.  \ref{fig:motivation}(b),
% instead of the normal source and target sample classification process, 
the domain classifiers are gradually learnt by selecting from easy samples to harder ones, which can additionally improve the performance.  
%In addition, by reversing the gradients of the easily-classified samples, the features of these samples would become more domain-invariant and harder to be classified by the domain classifiers. The two processes of SPCAN are performed alternatively. 
% We also conduct careful component analysis to empirically compare the effectiveness of different combinations of selection criteria for both image and domain classifiers.

For the action recognition task, we additionally propose TS-SPCAN to learn better video-level feature representations under the unsupervised domain adaptation setting. We utilize the temporal segment network(TSN) \cite{wang2016temporal} as the base network, which fuses the RGB stream and the optical flow stream to predict the action classes. We propose a two-stream domain adaptation network, Two-stream SPCAN (TS-SPCAN), for both RGB and optical flow streams. Due to complementary representations in the two streams, in TS-SPCAN, the more reliable pseudo-labeled target samples of one stream (RGB/Flow) are used for another stream (Flow/RGB), which helps both streams to select more reliable pseudo-labeled target samples and substantially boost the domain adaptation results.

Comprehensive experiments on several benchmark datasets Office-31, ImageCLEF-DA and VISDA-2017 for the object recognition task and UCF101-10 and HMDB51-10 for the video action recognition task are conducted in Section \ref{sec:exp}. The results clearly demonstrate the effectiveness of our methods for both tasks. 
%The overall conclusion are drawn in Section 8.

A preliminary version of this paper was presented in  \cite{zhang2018collaborative}. This paper extends the work in \cite{zhang2018collaborative} by additionally proposing two new methods SPCAN and TS-SPCAN for domain adaptation as well as new experiments to evaluate the effectiveness of SPCAN and TS-SPCAN.
% \vspace{-2mm}
\section{Related Work}
\subsection{Domain Adaptation}
Classical domain adaptation methods can be roughly categorized as feature (transform) based approaches \cite{baktashmotlagh2013unsupervised,fernando2013unsupervised,kulis2011,gopalan2011domain,gong2012geodesic,sugiyama2008direct}, which aims to seek new domain-invariant features or learn new feature transforms for domain adaptation, and classifier based approaches \cite{duan2012domain, huang2007correcting, bruzzone2010domain, bickel2007discriminative,li2018domain}, which directly learn the target classifiers (\textit{e.g.}, the SVM based classifiers) for domain adaptation. 

Moreover, several deep transfer learning methods were proposed recently based on the convolutional neural networks (CNNs), which can be roughly categorized as statistic-based approaches \cite{mancini2018boosting, sun2016deep, carlucci2017autodial, long2017jan, long2015learning, long2016unsupervised,pinheiro2018unsupervised} and adversarial learning based approaches \cite{tzeng2015simultaneous, ganin2015unsupervised,ganin2016domain,bousmalis2016domain,liu2016coupled,tzeng2017adversarial, sankaranarayanan2017generate, hu2018duplex, russo2017source, hoffman2017cycada, lee2018diverse, saito2018maximum,rozantsev2018residual, rozantsev2018beyond}. The statistic-based approaches usually employ statistic-based metrics to model the domain difference. For example, Deep Correlation Alignment(CORAL) \cite{sun2016deep} utilized the mean and variance of the different domains, while Deep Adaptation Network \cite{long2015learning} adopt the MMD\cite{gretton2012kernel} as the domain distance. On the other hand, the adversarial learning based approaches used Generative Adversarial Networks (GANs) \cite{goodfellow2014generative} to learn domain-invariant representation, which can be explained as minimizing the $\cH$-divergence~\cite{ben2010theory} or the Jensen-Shannon divergence~\cite{gulrajani2017improved} between two domains.

Our work is more related to adversarial learning based approaches. The adversarial learning strategy has been widely used in domain adaptation for learning domain invariant representation. For example, the works in \cite{tzeng2015simultaneous, ganin2016domain, tzeng2017adversarial} introduced different domain loss terms to learn domain-invariant representation by adding additional classifiers or using the adversarial learning strategy. As a result, it is difficult to distinguish which domain each sample belongs to. Besides learning domain invariant representation, a few works have been proposed to preserve certain representation differences between different domains for better extracting features in the target domain. In particular, Domain Separation Network (DSN)  \cite{bousmalis2016domain} introduced one shared encoder and two private encoders for two domains by using the similarity loss and the difference loss.  As a result, the shared and private representation components are pushed apart, whereas the shared representation components are enforced to be similar. The works in \cite{rozantsev2018beyond,rozantsev2018residual} used the two-stream network for the source and target domains, in which two different regularization terms are introduced to learn the domain-invariant representation, respectively.  The approach in \cite{lee2018diverse} proposed an image-to-image translation framework to disentangle the domain-invariant features and the domain-specific features (attributes) and then use both types of features to translate the labelled images from the source domain to the target domain for training the classifier for domain adaptation. Different from these works, we propose a new collaborative and adversarial learning strategy to not only learn more domain-invariant representations in deeper layers through domain adversarial learning, but also learn domain-specific representations in shallower layers through domain collaborative learning. We used a shared model for both source and target domains and automatically learn the domain-specific and domain-invariant features by using different domain classifiers and learning the corresponding weights for different layers. In this way, the discriminability of the low-level features can be well preserved, while the common high-level semantics features can also be well extracted.

%Among adversarial learning based approaches, most works \cite{bousmalis2017unsupervised,liu2016coupled,sankaranarayanan2017generate,hu2018duplex, russo2017source, hoffman2017cycada,lee2018diverse} are based on Generative Adversarial Networks (GANs)\cite{goodfellow2014generative} by using generators to synthesize images or representations in different domains in order to learn domain invariant features. Moreover, Grifary \textit{et al.}~\cite{ghifary2016deep} proposed the Deep Reconstruction Classification Network (DRCN) method to learn a shared representation for classifying labeled source data and reconstructing unlabeled target data. Our work is more related to DANN \cite{ganin2016domain}, which learns domain-invariant features by inversely backpropagating the gradients from the domain classifier. In addition to learn domain uninformative representations at the last layer as in \cite{ganin2016domain}, we additionally learn domain informative representations in lower layers through domain collaborative learning which can improve visual recognition performance. 

%\subsection{Progressive Training Paradigm}
%In our extended method iCAN\cite{zhang2018collaborative} and SPCAN, progressive paradigm is used to select and reweigh the pseudo-labeled target samples in each iteration. 
\vspace{-1mm}
\subsection{Progressive Training for Domain Adaptation}
A few works have also proposed to progressively select pseudo-labeled target samples for improving unsupervised domain adaptation.  In \cite{bruzzone2010domain}, Bruzzone~\textit{et al.} proposed the Domain Adaptation Support Vector Machine (DASVM) method to iteratively select the unlabeled target domain data while simultaneously remove some labeled source samples. Chen~\textit{et al.}~\cite{chen2011co} proposed a features selection approach to split the target features into two views, and use co-training~\cite{blum1998combining} to leverage target domain pseudo-labeled samples in a semi-supervised learning fashion. More recently, Saito~\textit{et al.}~\cite{saito2017asymmetric}, equally leveraged three classifiers with an asymmetric tri-training network, where they selected pseudo-labeled target samples by the agreement of the two network classifiers, and then use these samples to retrain the third classifier to learn the target-discriminative features.
In our preliminary work, we propose a method called iCAN \cite{zhang2018collaborative} in which the domain classifier and the image/video classifier are both used to guide selecting more reliable samples. However, due to the large domain difference, the progressive training procedure may not work well when the labels of the selected pseudo-labeled samples are wrongly predicted. 

Different from those works, in this work, we propose a SPCAN model, which is inspired by the Curriculum Learning (CL)\cite{bengio2009curriculum} and Self-Paced Learning~\cite{kumar2010self}, to select target pseudo-labeled samples in an easy-to-hard fashion, thus improving the stability of progressive training. While several works were proposed to incorporate self-paced learning for object detection and semantic segmentation~\cite{tang2012shifting,zhang2017curriculum,zou2018unsupervised}, we aim to propose a new general unsupervised domain adaptation method for object and action recognition. Correspondingly, we design a new self-paced sample selection module.

% Our work SPCAN is inspired by the Curriculum Learning (CL)\cite{bengio2009curriculum} and Self-Paced Learning (SPL) \cite{kumar2010self}. Bengio \textit{et al.} proposed Curriculum Learning (CL)\cite{bengio2009curriculum}, which gradually learnt from easy samples to complex samples in a predefined scheme. Self-Paced Learning (SPL) \cite{kumar2010self} was then proposed, which adapted curriculum design as a regularization term to update model automatically. Many supervised or semi-supervised works applied this iterative training strategy for different visual recognition tasks \cite{zhang2017spftn, fan2017complex, shen2014unsupervised, wu2018exploit}. Tang et al. \cite{tang2012shifting} proposed a self-paced domain adaptation approach for object detection task from images to videos by learning with pseudo-labels in an easy-to-hard way. Zhang et al. \cite{zhang2017curriculum} proposed a curriculum-style domain adaptation approach for semantic segmentation task. In \cite{zou2018unsupervised}, Zou \textit{et al.} proposed a class-balanced self-learning method to deal with the semantic segmentation problem under unsupervised domain adaptation setting.
% In contrast to the existing methods that applied the progressive learning strategy for the supervised object classification task, or for the object detection or the semantic segmentation task, this work incorporates unsupervised domain adaptation framework with the self-paced learning strategy to progressively learn better representations for object and action recognition tasks.
\vspace{-1mm}
\subsection{Multi-View Domain Adaptation}

Recently, a few multi-view domain adaptation methods are proposed. In \cite{zhang2011multi}, zhang \textit{et al.} proposed a tranfer learning framework called Multi-View Transfer Learning with a Large Margin Approach (MVTL-LM) to integrate features of the source domain and the target domains from different views. The work in \cite{yang2013multi} incorporated multiple views of data in a perceptive transfer learning framework and proposed a domain adaptation approach called Multi-view Discriminant Transfer (MDT) to find the optimal discriminant weight vectors for each view by maximizing the correlation of the vector projections of two-view data and minimising the domain discrepancy and view disagreement simultaneously.
Niu \textit{et al.} proposed a multi-view domain generalization (MVDG) method \cite{niu2015multi}, which used exemplar SVM and alternating optimization algorithm to exploit
the manifold structure of unlabeled target domain samples for domain adaptation. The work in
\cite{ding2018robust} formulated a unified multi-view domain adaptation framework and conducted a comprehensive discussion across both multi-view and domain adaptation problems.  In contrast to these works, our newly proposed two-stream unsupervised domain adaptation approach TS-SPCAN focuses on progressively exchanging complementary information between the RGB and optical flow streams for the action recognition task under the unsupervised domain adaptation setting in an end-to-end fashion.
% In this way, the selected pseudo-labels of one stream(RGB/flow) can help another stream(flow/RGB), which boost the adaptation performance. 

\vspace{-1mm}
\subsection{Domain Adaptation for Action Recognition }
% Video-based action recognition task has been widely studied in these years[]. The main approaches can be categorized as learning the hand-crafted features[] and the deep learning features[] for recognizing the video-based actions. 

% The hand-crafted features are used to describe the spatial-temporal information of the video. Learning the hand-crafted representations is usually performing the spatial-temporal point extraction, feature quantization and feature encoding. Usually, a local 3D space is extracted from the Spatio-Temporal Interest Point(STIP) or the trajectories of the video. Then, the quantized local descriptors are then encoded into the action feature vector with fixed dimensionality, for instance, Histogram of Gradient and Optical Flow(HOG/HOF)\cite{laptev2008learning}, 3D histogram of Optical Flow(HOG3D)\cite{klaser2008spatio}, Motion Boundary Histogram(MBH)\cite{wang2013dense}. The existing encoding methods are then employed to combine the local descriptors to improve the action recognition performance, including Bag-of-Visual Words(BoVW)\cite{csurka2004visual}, Vector of Locally Aggregated Descriptors(VLAD)\cite{jegou2012aggregating}, Fish Vector(FV)\cite{sanchez2013image}, etc.
% \textcolor{blue}{(Needed?)}

Convolutional Neural Networks(CNNs) have been extensively exploited for the video-based action recognition task \cite{wang2013action, ji20133d, karpathy2014large, simonyan2014two, yue2015beyond, wang2015action, tran2015learning, feichtenhofer2016convolutional, wang2016temporal}. Recently, a deep learning based approach called Temporal Segment Networks(TSN) \cite{wang2016temporal} has been proposed to learn the video features based on the two stream framework (RGB and Optical flow), which used the sparse sampling strategy and fused several action prediction scores for each stream. 

The problem of domain shift between videos from different datasets is rarely explored. Recently, several transfer learning methods \cite{niu2016exploiting, yu2018exploiting, Jamal2018ddaa, busto2018open} have been applied for the action recognition tasks. Niu~~\textit{et al.}~\cite{niu2016exploiting} has proposed a multiple-domain adaptation method for video action recognition by leveraging a large number of web images from different sources. HiGAN~\cite{yu2018exploiting} used the two-level adversarial learning approach to find domain-invariant feature representation from source images and target action videos. Jamal~\textit{et al.}~\cite{Jamal2018ddaa} presented a domain adaptation method by combining adversarial learning for extracting action features under a 3D CNN network.
In \cite{busto2018open}, Busto~~\textit{et al.}~proposed a domain adaptation framework for the action recognition task, where the videos in the target domain contains instances of categories that are not present in the source domain. However, they mainly treat features of a video as a single representation, while our TS-SPCAN incorporates co-training into our Self-Paced CAN to take advantage of the complementary information of RGB and optical flow streams in an iterative fashion, which boosts the adaptation performance.

% \textcolor{blue}{
% }

\vspace{-2mm}
\section{Collaborative and Adversarial Network}
\label{sec:can}
\begin{figure*}
\begin{center}
% \fbox{\rule{0pt}{2in} \rule{.9\linewidth}{0pt}}
\includegraphics[width=\linewidth]{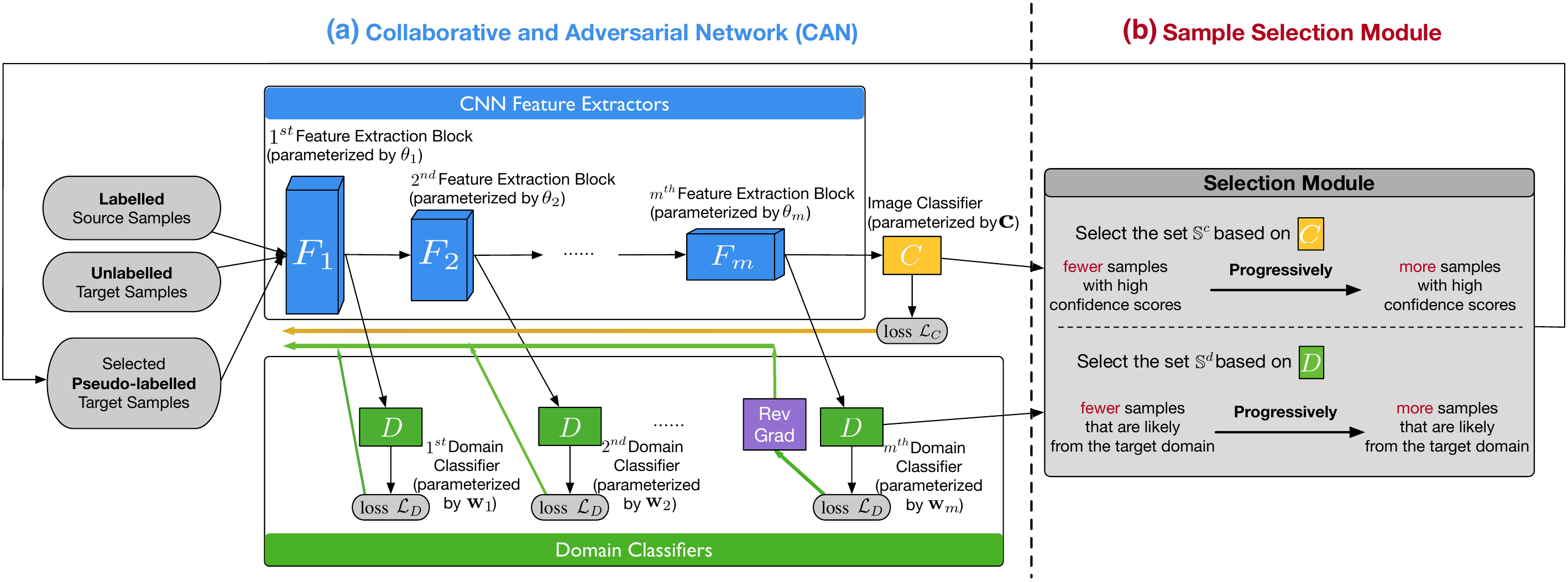}
\end{center}
\vspace{-12pt}
   \caption{The pipeline of our SPCAN method for image classification, which can also be similarly used for video classification. This pipeline consists of the CAN module (a) and the sample selection module (b). For the CAN module, we add multiple domain classifiers at different blocks of the CNN. We then train the domain classifiers by learning positive or negative weights on the losses to automatically extract domain-specific and domain-invariant features. As a result, the features learnt through domain collaborative learning at the lower layers will become more domain specific, while the features learnt through domain adversarial learning at the higher layers will become more domain invariant.  For our sample selection module, based on the image classifier, we gradually select the samples from fewer target samples to more target samples to construct the set $\mathbb{S}^c$; To construct set $\mathbb{S}^d$, based on the domain classifier, we gradually select the samples from fewer target samples to more target samples that are likely from the target domain. We use both selected sets $\mathbb{S}^c$ and $\mathbb{S}^d$ to retrain the CAN model in the next training epoch. As a result, the model gradually learns more discriminant features for image classification as well as aligning the distributions of two domains. }
\vspace{-4pt}
\label{fig:Overview}
\end{figure*}

\vspace{-1mm}
\subsection{Unsupervised domain adaptation in Object Recognition}
In unsupervised domain adaptation, we are given a set of labeled source samples and a set of unlabeled target samples, where the source and target data distributions are different. The goal is to learn a classifier, which can accurately categorize the unlabeled target samples into the given classes. In this section, we focus on the image classification task, and our approach can be readily extended for video action classification (see Section \ref{sec:cospcan}).

Formally, denote $\mathbf{X}^{s}=\{(\mathbf{x}^{s}_i,y^{s}_i)|_{i=1}^{N^{s}}\}$ as the set of samples from the source domain, where $\mathbf{x}^{s}_i$ is the $i$-th source domain sample, $y^s_i$ is its category label, and $N^s$ is the number of samples in the source domain. Similarly, we denote $\mathbf{X}^{t}=\{\mathbf{x}^{t}_{i}|_{i=1}^{N^{t}}\}$ as the set of samples from the target domain, where $\mathbf{x}^t_i$ is the $i$-th target domain sample, and $N^t$ is the number of samples in the target domain. For convenience, we also use $\mathbf{X} = \mathbf{X}^s \cup \mathbf{X}^t = \{(\mathbf{x}_i, d_i)|_{i=1}^{N}\}$ to denote the training samples from both domains, where $N$ is the total number of samples, $d_i \in \{0, 1\}$ is the domain label for the $i$-th sample with $d_i=1$ as the source domain, and $d_i = 0$ as the target domain.

There are many deep domain adaption approaches, which learn the domain invariant representations for the two domains \cite{long2015learning, ganin2016domain}. Usually, their methods are based on a multi-task learning scheme, which aim to minimize the classification loss on the labeled source data, and also align the source and target domain distributions with a domain discrepancy loss (\textit{e.g.}, $\cH$-divergence or MMD). 

However, since the network aims to minimize the source domain classification loss, domain specific features that are discriminative for image classification might be lost in the training process, making the learnt representation less discriminative for classifying images in the target domain. To this end, we propose a new Collaborative and Adversarial Network (CAN) model, which simultaneously learns domain-specific and domain-invariant features through domain collaborative and domain adversarial learning. Empirically, the final feature representation learnt by our CAN method is more discriminative for image classification. Our new method CAN with three parts is described step by step as follows.

\vspace{-2mm}
\subsection{Domain Specific Feature Learning}
Domain specific features are useful for distinguishing which domain each sample belongs to. In a general deep neural network, features learnt from lower layers are mostly low-level details, e.g. corners and edges, and features learnt at higher layers are semantic features that describe the objects or attributes. The low-level features are often domain specific that are useful for distinguishing not only images from different domains but also images from different classes. 

Therefore, different from the existing methods \cite{long2015learning, ganin2016domain} which learn domain invariant features for the whole network, our CAN method makes these low-level features distinguishable from different domains. To keep the domain specific features when training the whole network, we propose to introduce several domain classifiers, where each of the domain classifiers is applied on each block (including several CNN layers). 

In particular, given a training sample $\x$, let us denote its feature representation extracted from a certain layer (\textit{e.g.}, ``conv3'') as $\f$. We also denote the feature extraction network before this layer (inclusive) as $F$, then $\f$ can be deemed as the output of $F$ (\textit{ i.e.}, $\f = F(\x; \bt)$), where $\bt$ are the parameters for $F$. 

To let $\f$ encode as much target information as possible, we propose to learn a domain classifier $D: \f \rightarrow \{0, 1\}$, which is used to predict whether the input sample $\x$ belongs to the source domain (1) or the target domain (0). Intuitively, if the feature representation $\f$ can be used to well distinguish which domain the sample $\x$ comes from, sufficient target information should be encoded within $\f$. Then, the objective for learning the domain classifier $D$ can be written as,
\begin{eqnarray}
\label{eqn:discriminator}
\min_{\bt, \w}\frac{1}{N}\sum_{i=1}^N\cL_{D}\left(D(F(\x_i; \bt); \w), d_i\right),
\end{eqnarray}
where $\w$ is the parameters for the domain classifier $D$, $d_i$ is the domain label, and $\cL_{D}$ is the classification loss, where the binary cross entropy loss is used in this paper.

\vspace{-2mm}
\subsection{Domain Invariant Feature Learning}
Domain invariant features cannot clearly distinguish whether the sample is from the source or the target domain.
To learn the domain invariant representations at the final layer for image classification, other deep transfer learning approaches \cite{long2015learning, ganin2016domain} designed various domain alignment approaches to handle this problem. Our work uses domain adversarial learning, which learns a domain classifier and reversely back-propagate the domain gradient to confuse the domain classifier through a Gradient Reversal Layer \cite{ganin2015unsupervised}. In this way, the shared features learnt in the domain adversarial way is able to reduce the distribution mismatch between the source and target domain. With this domain adversarial learning strategy, the learnt image classifier performs better on the target domain. 

Specifically, by following the domain adversarial training strategy in DANN \cite{ganin2015unsupervised,ganin2016domain}, the objective for learning the domain invariant representations can be written as
% \vspace{-1mm}
\begin{eqnarray}
\label{eqn:dann}
\max_{\bt}\min_{\w}\frac{1}{N}\sum_{i=1}^N\cL_{D}\left(D(F(\x_i; \bt); \w), d_i\right),
% \vspace{-1mm}
\end{eqnarray}
where the difference between the above equation and Eqn. (\ref{eqn:discriminator}) is that the network $F$ is trained by minimizing the domain classification loss $\cL_{D}$ in Eqn. (\ref{eqn:discriminator}) for domain specific features, while $F$ is trained by maximizing $\cL_{D}$ in Eqn. (\ref{eqn:dann}) for domain invariant features. Thus, the network can be easily trained by minimizing the loss with the conventional optimization techniques like the stochastic gradient descent. 

Intuitively, we optimize  Eqn. (\ref{eqn:discriminator}) in order to distinguish the two domains, such that the feature representation generated from $F$ is domain specific. Instead, we optimize Eqn. (\ref{eqn:dann}) in order to remove domain specific information, such that two domains are similar to each other when using the feature representations generated from $F$. Thus, optimizing Eqn. (\ref{eqn:dann}) inevitably causes loss of target specific features, since such representation does not help to confuse the discriminator. To learn
domain specific representation at lower layers and domain invariant representation at higher layers, we propose a collaborative and adversarial learning scheme below, which accommodates the opposite tasks in Eqn. (\ref{eqn:discriminator})  and Eqn. (\ref{eqn:dann}) into one framework.

\vspace{-1mm}
\subsection{Collaborative and Adversarial Feature Learning Scheme}

%To accommodate the two opposite tasks, domain informative and domain uninformative feature learning should be performed together. We propose a
In the collaborative and adversarial learning scheme, we encourage the feature representation of the model to keep as much domain specific feature as possible at lower layers, while enforce the feature representation to be domain invariant at higher layers. The two tasks are applied to multiple layers with different weights, such that feature representation goes smoothly from domain specific to domain invariant when the samples are forwarded in the network from lower layers to higher layers. 

The architecture of our CAN model is illustrated in the left part of  Fig~\ref{fig:Overview}. We divide the whole CNN network into several groups, which are called blocks (\ie, the blue cubes). Each block consists of several consecutive CNN layers. Suppose in total $m$ blocks are used, we build a domain discriminator after the last layer of each block, leading to $m$ domain discriminators (\ie, the green rectangles). We assign a weight $\lambda_l$ ($l=1, \ldots, m$) for each domain discriminator, and automatically optimize the weights when we back-propagate the losses. 
Intuitively, a higher $\lambda_l$ indicates the network tends to learn domain specific features at the $l$\-th block. When $\lambda_l < 0$, the network tends to learn domain invariant features at the $l$\-th block.

For example, for a ResNet50 network, we divide the base model into four blocks, and add four discriminators after each block, where the sizes of the receptive fields are changed. Then, we add a domain discriminator and its corresponding weight to each block to learn domain specific and domain invariant features through our domain collaborative and adversarial learning method.

Formally, denote $\bt_l$ ($l=1, \ldots, m$) as the network parameters before the $l-$th block (inclusive), and denote $\w_l$ as the parameters of the domain discriminator at the $l-$th block. For $m$ blocks, we denote $\W = \{\w_1, \ldots, \w_m\}$, $\Theta_F = \{\bt_1, \ldots, \bt_m\}$ \footnote{We define $\Theta_F$ in a similar way as W for better presentation, despite the fact that in general $\Theta_F = \bt_m$ since $\bt_m$ usually includes the paramerters in $\bt_l$ for $l_m$}. We also denote the loss term from a domain discriminator as $\cL_{D}(\bt, \w) = \frac{1}{N}\sum_{i=1}^N\cL_{D}(D(F(\x_i;\bt);\w),d_i)$.

The objective of our collaborative and adversarial learning scheme can be written as follows:
% \vspace{-2mm}
% \begin{eqnarray}
% \!\!\!\!\min_{\Theta_F, \W,  \boldsymbol{\lambda}}\!\cL_{CA} \!\!\!\!&\!\!=\!\!\!&\!\!\!\sum_{l=1}^{m-1}
% \lambda_l\cL_D(\bt_l, \w_l)\!+\!\lambda_m\cL_D(\bt_m, \w_m),
% \label{eqn:ca}
% \\
% \mbox{s.t.} && \!\!\!\!\!\!\sum_{l=1}^{m-1} \lambda_l= \lambda_0,  \quad |\lambda_l| \leq \lambda_0, \nonumber
% \end{eqnarray}
\begin{eqnarray}
\!\!\!\!\min_{\Theta_F, \boldsymbol{\lambda}}\!\cL_{CA} \!\!\!\!&\!\!=\!\!\!&\!\!\!\sum_{l=1}^{m}
\lambda_l\min_{\W}\cL_D(\bt_l, \w_l),
\label{eqn:ca}
\\
\mbox{s.t.} && \!\!\!\!\!\!\sum_{l=1}^{m} \lambda_l= -1,  \quad |\lambda_l| \leq 1. \nonumber
\end{eqnarray}
where the set of the domain discriminator weight $\boldsymbol{\lambda} = \{\lambda_1, \ldots, \lambda_{m}\}$.  
% $\lambda_l$ is the weight for each block with $l=1, \ldots, m$, which is automatically optimized during back-propogation.(already written)
DANN \cite{ganin2016domain} only used one domain classifier and employed a gradient reversal layer to change the sign of the gradient back-propagated from the domain classifier (\textit{i.e.}, by multiplying by $-1$). In this work, we extend the DANN method to use multiple domain classifiers at different blocks, and automatically learn the weights for different domain classifiers. In order to be consistent with DANN \cite{ganin2016domain}, we set the sum of all weights $\lambda_l$'s to be $-1$.
By automatically optimizing the loss weights $\lambda_l$'s, we can reduce the number of hyper-parameters, and more importantly, we also allow multiple specific feature learning tasks to well collaborate with each other. When $\lambda_l \geq 0$,  the corresponding sub-problem is similar to the optimization problem in  Eqn. (\ref{eqn:discriminator}), so we disable the gradient reverse layer, and encourage the corresponding discriminator to learn domain specific features by performing the \textbf{Collaborative} learning process. On the other hand, when $\lambda_l < 0$, the corresponding sub-problem is similar to the max-min problem in Eqn. (\ref{eqn:dann}), so we enable the gradient reverse layer, and encourage the discriminator to learn domain invariant features by performing the \textbf{Adversarial} learning process. 

% Instead of using the negative weight, we use $|\lambda_l|$ as the loss weight and add a gradient reversal layer (GRL) before the domain discriminator to inversely back-propagate the domain gradient as in DANN

We can incorporate the loss $\cL_{CA}$ in (\ref{eqn:ca}) into any popular deep convolutional neural networks (CNNs) architecture (\textit{e.g.}, AlexNet, VGG, ResNet, DenseNet, \textit{etc}) to learn robust features for unsupervised domain adaptation. Therefore,  we jointly optimize the above loss with the conventional classification loss. Let an image classifier be $C: \f \rightarrow \tilde{y_i}$. The image classification loss can be denoted as
\begin{eqnarray}
\label{eqn:src}
\cL_{src} = \frac{1}{N^s}\sum_{i=1}^{N^s}\cL_{C}(C(F(\x^s_i; \Theta_F);\c), y^s_i),
\end{eqnarray}
where $\c$ is the parameters for the classifier $C$ and $\cL_{C}$ is the cross entropy loss for the classification task.

The final objective for our collaborative and adversarial network (CAN) can be written as
\begin{eqnarray}
\label{eqn:can}
\min_{\Theta_F, \c, \W, \lambda_l \in \Lambda}\cL_{CAN} = \alpha\cL_{CA} + \cL_{src},
\end{eqnarray}
where $\alpha$ is the trade-off parameter and $\Lambda = \{\lambda_l|\sum_{l=1}^{m} \lambda_l=-1, |\lambda_l| \leq 1, l=1,\ldots, m\}$ is the feasible set of $\lambda_l$'s. % While we set $\sum_l \lambda_l=-1$ (see the constraint in Eqn. (\ref{eqn:ca})), we additionally introduce the trade-off parameter $\lambda$ as in \cite{ganin2016domain} in order to provide more flexibility.

As discussed in \cite{ganin2016domain}, the domain adversarial training in Eqn. (\ref{eqn:dann}) can be seen as to minimize the $\cH$-divergence between two domains. Formally, given two domain distributions $\cD^s$ of $\X^s $and $\cD^t$ of $\X^t$, and a hypothesis class $\cH$(a set of binary classifiers $\textit{h}: \X \rightarrow [0,1]$), the $\cH$-divergence for the two distributions $d_\cH(\cD^s,\cD^t)$ is then defined as~\cite{ben2007analysis,ben2010theory},
\begin{eqnarray}
\label{eqn:dh}
d_\cH(\cD^s,\cD^t) = 2 \sup_{\textit{h}\in \cH}\Big|\Pr{_{\x\sim \cD^s}}[\textit{h}(\x)=0] - \Pr_{\x\sim\cD^t}[\textit{h}(\x)=0]\Big|.\nonumber
\end{eqnarray}
It has been shown in \cite{ben2007analysis,ben2010theory} that when $\cH$ is a symmetric hypothesis class, the $\cH$-divergence can be empirically computed by, 
% and the \textit{empirical $\cH$-divergence} between the two samples of $\X^s$ and $\X^t$ from two domains can be computed by,
\begin{eqnarray}
\begin{aligned}
\label{eqn:hat_dh}
\hat{d}_\cH(\X^s,\X^t) = 2 \Bigg(1- \min_{\textit{h}\in \cH}\bigg [\!\!\!\!\!\!\!\!\!\!\!\!\!\!\!\! &&\frac{1}{N^s}\sum_{i=1}^{N^s}\textit{I}\Big(\textit{h}(\x_i^s)=1\Big) \\
&&+\frac{1}{N^t}\sum_{i=1}^{N^t}\textit{I}\Big(\textit{h}(\x_i^t)=0\Big)\bigg ]\Bigg),
\end{aligned}
\end{eqnarray}
where $I(a)$ is the binary indicator function. It can be easily observed that it is equivalent to optimize the objective in Eqn. (\ref{eqn:dann}) or find a hypothesis $\h$ that minimizes $\hat{d}_{\cH}(\X^s,\X^t)$. 

The DANN \cite{ganin2015unsupervised, ganin2016domain} approach is designed to simultaneously minimize the classification error on the labeled samples in the source domain (\ie, Eqn. (\ref{eqn:src})), and the empirical $\cH$-divergence between two domains (\ie, Eqn. (\ref{eqn:dann})) when training neural networks. To achieve this, DANN adds a domain discriminator at the final layer, and reversely back-propagates the gradient learnt from the domain discriminator to make the feature extracted from the final layer become domain indistinguishable. 

Similarly, in our CAN, we introduce a domain discriminator at each layer, and minimize the domain discrepancies measured at all layers. Therefore, the DANN approach can be seen as a special case of our CAN when we only have one domain discriminator at the final layer. By jointly optimizing domain discriminators at different layers with the optimal weights, we obtain a lower bound to the $\cH$-divergence, which is described as follows,
\begin{prop}
The Eqn.~(\ref{eqn:dann}) can be written as,
\begin{eqnarray}
\min_{\mathbf{\bm{\theta}}}-1\cdot\min_{\w}\frac{1}{N}\sum_{i=1}^N\cL_{D}(D(F(\x_i; \bm{\theta}); \w), d_i).\nonumber
\end{eqnarray}
The optimum of Eqn.~(\ref{eqn:ca}) is smaller than or equal to the optimum of the above equation.
\end{prop}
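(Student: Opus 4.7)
The plan is to establish the two sentences of the proposition in order, using only elementary $\min$/$\max$ algebra together with the structure of the feasible set $\Lambda$ of Eqn.~(\ref{eqn:ca}). For the first sentence, I would invoke the identity $\max_{\bt} h(\bt) = -\min_{\bt}(-h(\bt))$ with the inner function $h(\bt)=\min_{\w}\frac{1}{N}\sum_{i=1}^{N}\cL_{D}(D(F(\x_i;\bt);\w),d_i)$. This immediately yields $\max_{\bt}\min_{\w}\cL_{D}(\bt,\w) = -\min_{\bt}[-\min_{\w}\cL_{D}(\bt,\w)]$, and absorbing the leading $-1$ inside the objective produces exactly the displayed min-form $\min_{\bt}[-1\cdot\min_{\w}\cL_{D}(\bt,\w)]$. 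I would read ``can be written as'' at the level of optimization targets: both problems share the same optimizer $\bt$ and differ only by a sign on the optimal value, and I would make this convention explicit so that the ``optimum'' compared in the second sentence is unambiguously the value of the common min-form objective $-\min_{\w}\cL_{D}$.

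For the second sentence, I would exhibit a single feasible point of Eqn.~(\ref{eqn:ca}) whose objective already reproduces the rewritten form. Specifically, take $\lambda_m=-1$ and $\lambda_l=0$ for $l<m$; this lies in $\Lambda$ since $\sum_{l=1}^{m}\lambda_l=-1$ and $|\lambda_l|\le 1$ for every $l$. Plugging this into $\cL_{CA}=\sum_{l=1}^{m}\lambda_l\min_{\W}\cL_{D}(\bt_l,\w_l)$ collapses the sum to the single term $-\min_{\w_m}\cL_{D}(\bt_m,\w_m)$, so the outer minimization over $\Theta_F$ reduces to $\min_{\bt_m}[-\min_{\w_m}\cL_{D}(\bt_m,\w_m)]$, which is precisely the value of the rewritten Eqn.~(\ref{eqn:dann}). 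Since the joint minimization in Eqn.~(\ref{eqn:ca}) ranges over all admissible $(\Theta_F,\boldsymbol{\lambda})$ with $\boldsymbol{\lambda}\in\Lambda$, and this set contains the particular choice above, the optimum of Eqn.~(\ref{eqn:ca}) is bounded above by the value attained at that choice, yielding the claimed inequality.

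The only delicate point I foresee, and the one I would flag at the outset, is the sign bookkeeping across the rewriting: the original max-min value and the rewritten min-form value have opposite signs, so the inequality in the second sentence must be stated consistently on the common min-form scale. Once this convention is pinned down, nothing else is needed beyond verifying that the chosen $\boldsymbol{\lambda}$ lies in $\Lambda$ and invoking monotonicity of the minimum under enlargement of the feasible set; there is no hidden obstacle requiring extra structural assumptions on $F$, $D$, or $\cL_{D}$.
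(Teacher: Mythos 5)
Your proposal is correct and follows essentially the same route as the paper: exhibit $\lambda_m=-1$, $\lambda_l=0$ for $l<m$ as a feasible point of $\Lambda$, observe that Eqn.~(\ref{eqn:ca}) then collapses to the rewritten form of Eqn.~(\ref{eqn:dann}), and conclude by monotonicity of the minimum over the larger feasible set. Your explicit handling of the sign bookkeeping in the max-to-min rewriting is a welcome clarification that the paper's one-line proof leaves implicit, but it does not change the substance of the argument.
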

\begin{proof}
The objective in the above equation can be seen as a special case of Eqn.~(\ref{eqn:ca}) when setting $\lambda_m = -1$, and $\lambda_l = 0$  for $l = 1, \ldots, m -1$. Thus, by optimizing Eqn.~(\ref{eqn:ca}), we expect to obtain to obtain a lower minimum than the optimum of the above equation. We complete the proof.
\end{proof}
By introducing multiple discriminators at different layers, and jointly optimizing all discriminators and their ensemble weights, we expect to obtain a lower domain discrepancy than DANN, thus we can better reduce the distribution mismatch between two domains. 

\vspace{-2mm}
\section{Self-Paced CAN (SPCAN)}
To effectively use the unlabeled target samples for learning better representations, in this section, we further propose a sample selection strategy to progressively select a set of pseudo-labeled target samples for training the model. A simple strategy based on thresholding was proposed in our preliminary work~\cite{zhang2018collaborative}. 
% \textcolor{blue}{
% To avoid introducing many pre-defined hyper-parameters for target samples selection as in \cite{zhang2018collaborative},
% \wenli{iCAN has more hyper-parameters than SPCAN? Please double check this argument. How many hyper-parameters does iCAN have, and how many does SPCAN have? A:when learning sample selection part, iCAN have alpha and beta, and the weight curve is tuned.}
Inspired by the self-paced learning \cite{kumar2010self}, we present a more effective approach to select pseudo-labeled samples, referred to as Self-Paced CAN (SPCAN). %in this section
Again, we focus on the image classification task in Sections \ref{sec:spl}, \ref{sec:sscspcan}, \ref{sec:ssdspcan}, \ref{sec:spcansummary}, and will discuss how to extend our work for video action recognition in Section \ref{sec:cospcan}.

\vspace{-1mm}
\subsection{Unsupervised Domain Adaptation with Self-Paced Learning}
\label{sec:spl}
Self-paced learning was originally designed for the supervised learning task. Let us denote the training data as $\X=\{(\x_i,y_i)|_{i=1}^{N}\}$, where $\x_i$ is the $i$-th training sample, $y_i$ is its corresponding label, and $N$ is the total number of samples. We also denote the prediction function as $g(\x_i,\w_{sp})$  with $\w_{sp}$ being the parameters to be learnt.  Let $\cL(y_i,g(\x_i,\w_{sp}))$ be the loss of the $i$-th sample. The goal of self-paced learning is to jointly learn the parameter $\w_{sp}$ and the latent weight variable $\mathbf{s} = \{s_i|_{i=1}^{N}\}$ for each sample by minimizing the objective function:
\begin{eqnarray}
\begin{aligned}
\label{eqn:spl}
\min_{\w_{sp} \in \cW,\mathbf{s}\in[0,1]^n} % \E(\w_{sp},\mathbf{s};T_{sp}) =&
\sum_{i=1}^{N}s_i\mathcal{L}(y_i,g(\x_i,\w_{sp})) -\sum_{i=1}^N s_i T_{sp},
\end{aligned} 
\end{eqnarray}
where $T_{sp}$ is the threshold (\ie, the age parameter) that controls the learning pace, and $-\sum_{i=1}^N s_i T_{sp} $ 
%$f(s_i, T_{sp})$ represents , which 
is the self-paced regularizer.

Using the alternating convex optimization strategy \cite{bazaraa2013nonlinear,kumar2010self}, for fixed $\w_{sp}$, the solution $\mathbf{s}^* = \{s_i^*|_{i=1}^{N}\}$ can be easily calculated as,
\begin{eqnarray}
\begin{aligned}
\label{eqn:spv}
s_i^* &=\begin{cases}
1, & \mathcal{L}(y_i,g(\x_i,\w_{sp}))< T_{sp}.\\
0, & \text{otherwise}.
\end{cases}
\end{aligned}
\end{eqnarray}

Intuitively, when updating the weight variable $\mathbf{s}$ with fixed $\w_{sp}$, a sample is selected ($s_i^* = 1$) when its loss is smaller than the threshold age parameter $T_{sp}$, in which the selected sample is also regarded as an ``easy'' sample for small $T_{sp}$. We will gradually select more samples in an easy-to-hard fashion, by gradually increasing the age parameter $T_{sp}$. 

In unsupervised domain adaptation, the goal is to train a model for the target domain. Thus the easiness of a training sample should be dependent on two aspects: 1) easy for classification, which shares the same spirit as self-paced learning; 2) easy for adaptation, which means the samples should be confidently discriminated to their corresponding domain, and their gradients shoule be further reversed to correctly update the feature extractor. From the above motivations, we respectively design a self-paced sample selection module based on the image classifier and the domain classifier, and integrate the selected pseudo-labeled target samples for boosting the model performance. The detail of the two selection schemes are provided below. 

% Challenges are two-fold. 1) only have unlabeled samples, we thus use the prediction confidence.
%  and weighting 
\vspace{-1mm}
\subsection{Image Classifier Based Sample Selection (CSS)}
\label{sec:sscspcan}
% Inspired by the self-paced learning selection strategy, we attempt to use more pseudo-labeled target samples in training the model by following the gradual easy-to-hard paradigm, which leads to SPCAN. In unsupervised domain adaptation, we are not provided with the ground-truth label of the target data. Therefore, the loss required in (\ref{eqn:spv}) is not available. Instead of using threshold for loss, we empirically utilize the prediction probability as the selection criteria. We first pre-train an initial CAN model, then for SPCAN, we separate the selection process in two parts, including the image classification part and the domain classification part, both containing their corresponding classifiers and features respectively. The details of the two parts will be described below. For the first image classifier part, we select the target samples according to the classification confidence scores.

% \vspace{-1mm}
\subsubsection{Image Classification Confidence Score} 
\label{sec:spcan_ccs}
We follow the idea of self-paced learning to gradually select easy target samples determined by the image classifier, which is referred to as the image classifier based sample selection (CSS) module. However, the major challenge in unsupervised domain adaptation is that we have only unlabeled samples in the target domain. Since the ground-truth labels for these unlabeled samples are absent, we cannot measure the easiness of these samples based on the classification loss as in the original self-pace learning work\cite{kumar2010self}. Thus, we instead determine the easiness of target samples according to their image classification confidence scores.

Specifically, let us denote $\{p_{cl}(\x^t_i)|_{cl=1}^{N_{cl}}\}$ as the output from the softmax layer of the image classifier, in which each $p_{cl}(\x^t_i)$ is the probability that $\x^t_i$ belongs to the $cl$-th category, and $N_{cl}$ is the total number of categories. The pseudo-label $\tilde{y}^t_i$ of $\x^t_i$ can be obtained by choosing the category with the highest probability, \textit{i.e.}, $\tilde{y}^t_i = \arg\max_{cl}p_{cl}(\x^t_i)$. We refer to the probability $p_{\tilde{y}^t_i}(\x^t_i)$ as the \textit{iamge classification confidence score}. Intuitively, the higher the classification confidence score is, the more likely the target sample is correctly predicted. In other words, we assume this sample would be relatively easy for classification. 
%A pseudo-labeled  target sample is denoted by $(\x^t_i, \tilde{y}^t_i)$.

%\textcolor{blue}{class c is conflicted with classfier c, is cl ok?}

\vspace{-1mm}
\subsubsection{Selection Strategy}
\label{sec:spcan_strategy}
Considering that the predicted pseudo-labels for some target samples may be incorrect, we only select the highly confident samples to retrain our model for better performance.
In our preliminary conference paper~\cite{zhang2018collaborative}, we presented a selection strategy based on thresholding during the whole training process. However, prediction of the CNN model often fluctuates during the training procedure due to usage of mini-batches. It becomes even more obvious when additionally incorporating adversarial training in the unsupervised domain adaptation tasks. Nevertheless, we note that ``easy" samples and ··difficult“ samples are actually decided relatively. Thus, we instead turn to select easiest samples with a ranking strategy.

Specifically, at each epoch, we use the image classifier to predict all the target samples. Then the classification confidence scores are used for sorting the target samples. After the sorting procedure, we select $r_c\%$ pseudo-labeled target samples with the highest classification confidence scores, where $r_c \in [0,\ 100]$ is a proportion parameter updated at each epoch to control the learning pace. $r_c$ plays a similar role to the threshold age parameter $T_{sp}$ as in self-paced learning. To ensure the quality of the selected pseudo-labeled samples, we dynamically adjust the proportion parameter $r_c$ based on the classification accuracy of the learnt model, which is validated by using the source domain samples. When the classification accuracy drops in two consecutive epochs, the proportion parameter $r_c$ is decreased to prevent model collapse by using low quality pseudo-labeled target samples. Otherwise, $r_c$ would be increased. %until all samples are selected \wenli{Double check this sentence}. 

Formally, let $A_i$ be the average classification accuracy over all the
source samples at the $i$-th epoch,
% \textcolor{blue}{$(i > r_0 T)$}, 
and $\bar{A}_i$ be the average accuracy of the first $i$ epochs (\ie, $\bar{A}_i = \frac{1}{i}\sum_{j=1}^iA_j$).  
% Suppose we have 
For the total $T$ training epochs, the proportion parameter $r_c$ at the $e$-th epoch is then adjusted according to,
% \wenli{I revised this equation. With $r_0$, it is possible for $r$ to exceed 1 in the original equation.}
\begin{eqnarray}
\begin{aligned}
\label{eqn:re}
r_c = & \;\frac{\sum_{i=1}^{e}\eta_i}{T}, \\
\label{eqn:rank}
\eta_i = & \begin{cases}
-1, & \text{if $A_i < \bar{A}_i$ and $A_{i-1}<\bar{A}_{i-1}$},\\
1, & \text{otherwise}.
\end{cases}
\end{aligned}
\end{eqnarray}
% \textcolor{blue}{where $\alpha$ is a hyper-parameter that controls the learning proportion of the selected pseudo-labeled target samples.}
% r_0 + (1-r_0), 

% As the result, let us denote $\mathbb{S}^c$ as the set of selected pseudo-labeled samples using confidence score for re-training image classifier.}

%\textcolor{blue}
% {
Let us denote $\mathbb{S}^c$ as the set of selected pseudo-labeled target samples by using the classification confidence score. Then, after selecting pseudo-labeled target samples, we add them into the training set, and re-train the image classifier in the next epoch. Moreover, considering that the low-confident samples are more likely to have incorrect pseudo labels, we assign different weights to different target samples according to their classification probabilities, \ie, $w^c(\x_i^t) = p_{\tilde{{y^t_i}}}(\x^{t}_{i})$. For convenience of presentation, let us define an indicator function $s^c(\x_i^t)$ which produces $1$ if $\x_i^t$ is selected (\ie, $\x_i^t \in \mathbb{S}^c $) and $0$ otherwise. Then, the loss of the image classifier for the selected pseudo-labeled target samples can be written as, 
\begin{eqnarray}
\label{eqn:target_c}
\cL_{tar}^c = \frac{1}{N^t}\sum_{i=1}^{N^t}s^c(\x_i^t)w^c(\x_i^t) \cL_{C}(C(F(\x^t_i; \Theta_F);\c), \tilde{y}^t_i),
\end{eqnarray}
where the pseudo-label $\tilde{y}^t_i$ is treated as the label of the sample $\x_i^t$ for computing the loss $\cL_{C}(\cdot,\cdot)$ and learning the parameters $\Theta_F$.

\vspace{-1mm}
\subsection{Domain Classifier Based Sample Selection (DSS)}
\label{sec:ssdspcan}
\subsubsection{Domain Classification Confidence Score}
We also gradually improve the domain classifiers by re-training the model with the selected target samples based on the domain classification confidence scores in an easy to hard manner, which is referred to as domain classifier based sample selection (DSS) module.

For a target sample $\x^t_i$, let us denote $d(\x^t_i)$ as the predicted probability from the last domain classifier trained in our CAN model, where the domain label is equal to $1$ when the sample is from the source domain, and $0$ when the sample is from the target domain. We define $1 - d(\x^t_i)$ as the \textit{domain classification confidence score}.
% So with using the reverse gradient layer, when the target domain discriminative score, we use the absolute 0.5 to measure whether the sample is easy or hard to be distinguished by the domain classifier(\ie, $d(\x^t_i) $), which is also called \textit{target domain discriminative score}. 
If the domain classification confidence score is higher, it is easier for the target domain sample to be classified by the domain classifier, which should be selected and reweighed with higher weights when learning the model.

\vspace{-1mm}
\subsubsection{Selection Strategy}
Similarly to our approach for the image classifier, we also define a proportion parameter $r_d$ for selecting the samples with the highest domain classification confidence scores. The proportion parameter $r_d$ is gradually increased at each epoch to include more samples for learning the domain classifier. At the $e$-th epoch, $r_d$ is adjusted as, 
\begin{eqnarray}
\begin{aligned}
\label{eqn:rd}
r_d &= \frac{e}{T},
\label{eqn:rank_d}
\end{aligned}
\end{eqnarray}
where $T$ is the total number of epochs.

Let us denote $\mathbb{S}^d$ as the set of target samples  selected by using their domain classification confidence scores. We also apply the weights on the selected target samples based on their predicted domain probabilities, which is defined as
\begin{eqnarray}
\begin{aligned}
\label{eqn:wd}
w^d(\x_i^t) = 2 (1 - d(\x^{t}_{i})).\\
\end{aligned}
\end{eqnarray}
In this way, when the CNN parameters are learnt by using the gradient reversal layer, 
lower weights (\ie, $w^d(\x_i^t) < 1$) are assigned to the selected pseudo-labeled target samples which are already predicted to be ``source'', and higher weights (\ie, $w^d(\x_i^t) > 1$) are assigned to the samples which are still predicted to be ``target'', to encourage more target samples to be classified as ``source''. 
%In contrast, when learning the domain discriminator, the discriminator is learnt to accurately distinguish source domain samples from target domain samples.

% The effectiveness of this weighting function will be discussed in Section \ref{sec:modelanalysis}.
% The reason behind this is that according to the reverse gradient of domain loss in \cite{ganin2015unsupervised, ganin2016domain}, the target samples that are predicted  we think the wrongly predicted target 
% In this way, as the training progress goes on, for domain discriminator part, the selection threshold $n_d$ is become smaller and more domain indistinguishable samples with smaller weights are utilized to re-train the network, which follows the easy-to-hard training strategy. 

Empirically, we also find that the results are improved if we re-train the domain classifier by including the selected pseudo-labeled target samples in the set $\mathbb{S}^c$, which were used for training the image classifier. A possible explanation is that we emphasize the importance of high-confident samples when learning the domain classifiers, and also further reduce the data distribution mismatch for these samples. As a result, the loss of the selected pseudo-labeled target samples for the domain classifier part can be written as,
% \vspace*{-4pt}
\begin{eqnarray}
\label{eqn:target_d}
\!\!\!\!\!\!\!\!\min_{\Theta_F,\W,\lambda_l \in \Lambda}\!\! \cL_{tar}^d \!\!\!\!\!&=&\!\!\!\!\! \frac{1}{N^t}\sum_{i=1}^{N^t}((s^c(\x_i^t)+s^d(\x_i^t))w^d(\x_i^t) \cL_{CA}),
\end{eqnarray}
where $s^c(\x_i^t)$ is the selection indicator from the image classifier defined in \ref{sec:spcan_strategy}, and $s^d(\x_i^t)$ is the selection indicator from the domain classifier, which equals to $1$ if $\x_i^t$ is selected (\ie, $\x_i^t \in \mathbb{S}^d $)  and $0$ otherwise. 

% The details about the sample selection for the domain classifier part will be discussed in section \ref{sec:modelanalysis}.
% \lambda_l \cL_{D}(D(F(\x^t_i; \Theta_F);\w), d_i)
% \sum_{l=1}^{m}
% where only the feature layers and domain classifiers are updated without the $\lambda_l$s in Eqn (\ref{eqn:ca}).
\begin{algorithm}[t]
\caption{Self-Paced Collaborative and Adversarial Network (SPCAN) for image classification.}\label{algo:spcan}
\begin{algorithmic}[1]

\State{\textbf{Input}: source domain labeled samples $\{(x^s_{i},y^s_{i})|_{i=1}^{N^s}\}$, target domain unlabeled samples $\{(x^{t}_i)|_{i=1}^{N^t}\}$. }
\Statex \emph{\textbf{Stage-1}}:
    \State{Train an initial CAN model by optimizing Eqn. (\ref{eqn:ca}).}
\Statex \emph{\textbf{Stage-2}}:
\Loop~until~$max\_epoch$~is reached:
    \State{Select the pseudo-labeled target samples based on $s^c(\x_i^t)$ by using the image classifier, and select the target samples based on $s^d(\x_i^t)$ by using the latest domain classifier, which respectively construct the set $\mathbb{S}^c$ containing target samples with both pseudo-labels $\tilde{y_i^t}$ and domain labels ``0'', and the set $\mathbb{S}^d$ containing target samples with domain labels ``0'' only.}
    \State{Calculate the weights $w^c(\x_i^t)$ and $w^d(\x_i^t)$ for the image classifier and the last domain classifier, respectively.}
    \State{Update the image classifier $\c$ and the parameters $\Theta$ from the CNN feature extractor based on the classification loss of the source samples using Eqn. (\ref{eqn:src}) and the selected pseudo-labeled target samples in the set $\mathbb{S}^c$ using Eqn. (\ref{eqn:target_c}).}
    \State{Update the domain classfiers $\W$, the weights of domain classifiers $\Lambda$, and the parameters $\Theta$ from the CNN feature extractor based on the loss of all the samples from both domains using Eqn. (\ref{eqn:ca}) and also the loss of the selected target samples in the set $\mathbb{S}^c$ and $\mathbb{S}^d$ using Eqn. (\ref{eqn:target_d}).}
\EndLoop
\end{algorithmic}
\end{algorithm}
\vspace{-1mm}
\subsection{SPCAN Summary}
\label{sec:spcansummary}
The objective of our SPCAN is to jointly optimize four losses, which include $\cL_{CA}$ defined in Eqn. (\ref{eqn:ca}), $\cL_{src}$ defined in Eqn. (\ref{eqn:src}), $\cL_{tar}^{c}$ defined in Eqn. (\ref{eqn:target_c}), and $\cL_{tar}^{d}$ defined in Eqn. (\ref{eqn:target_d}). The selected pseudo labelled target samples are only used in $\cL_{tar}^{c}$ and
$\cL_{tar}^{d}$. All labelled source samples are only used in $\cL_{src}$, while all unlabelled samples from both domain are used in $\cL_{CA}$. The total objective of SPCAN can be written as follow,
\begin{eqnarray}
\label{eqn:spcan}
\!\!\!\!\!\!\!\!\min_{\Theta_F,\c,\W,z,\lambda_l \in \Lambda} \!\!\cL_{SPCAN} \!\!\!\!\!\!&=& \!\!\!\!\!\!\!\,\cL_{src} + \cL_{tar}^{c} + \alpha(\cL_{CA} + \,\cL_{tar}^{d}).
\end{eqnarray}

The whole pipeline of SPCAN is illustrated in Fig~\ref{fig:Overview}, where the left part is the CAN model described in Section \ref{sec:can}, and the right part is the self-paced sample selection module. We iteratively select pseudo-labeled target samples and add them into the training set for training the CAN model in the next epoch.  We also depict the training process for SPCAN in Algorithm \ref{algo:spcan}. 
% Similar to our preliminary work iCAN \cite{zhang2018collaborative},
The training process for SPCAN includes two stages. For the first stage, we train an initial CAN model by optimizing Eqn. (\ref{eqn:ca}) for a small number of epochs. With this initial training strategy, the image classifier and domain classifiers are able to provide reasonable prediction results so that we can use them for selecting pseudo-labeled target samples. For the second stage, we separately select and reweigh different pseudo-labeled target samples for the image classifier and the domain classifiers, and use the gradients from their corresponding losses to update the classifiers and CNN parameters respectively.

\vspace{-2mm}
\subsection{Two-stream SPCAN for Video Action Recognition}
% \vspace{-1mm}
\label{sec:cospcan}
% \wenli{I feel it is better to keep exchange stratey only, making the contribution simple and clear. I do not see a clear motivation for fusion strategy.}
We further extend our SPCAN model to the video action recognition task under the unsupervised domain adaptation setting. In the video action recognition task, both spatial information and temporal information are important for classifying actions. Many Two-Stream ConvNets were proposed \cite{simonyan2014two, wang2016temporal}, in which two convolutional networks trained for the RGB stream and optical flow stream are finally fused for action recognition. Inspired by the co-training approach \cite{blum1998combining}, we further exploit the complementary information in two streams, and encourage the networks of two streams to help each other in the self-paced learning procedure, which is referred to as Two-stream SPCAN (TS-SPCAN).

\begin{figure}[H]
\begin{center}
\includegraphics[width=0.9\linewidth]{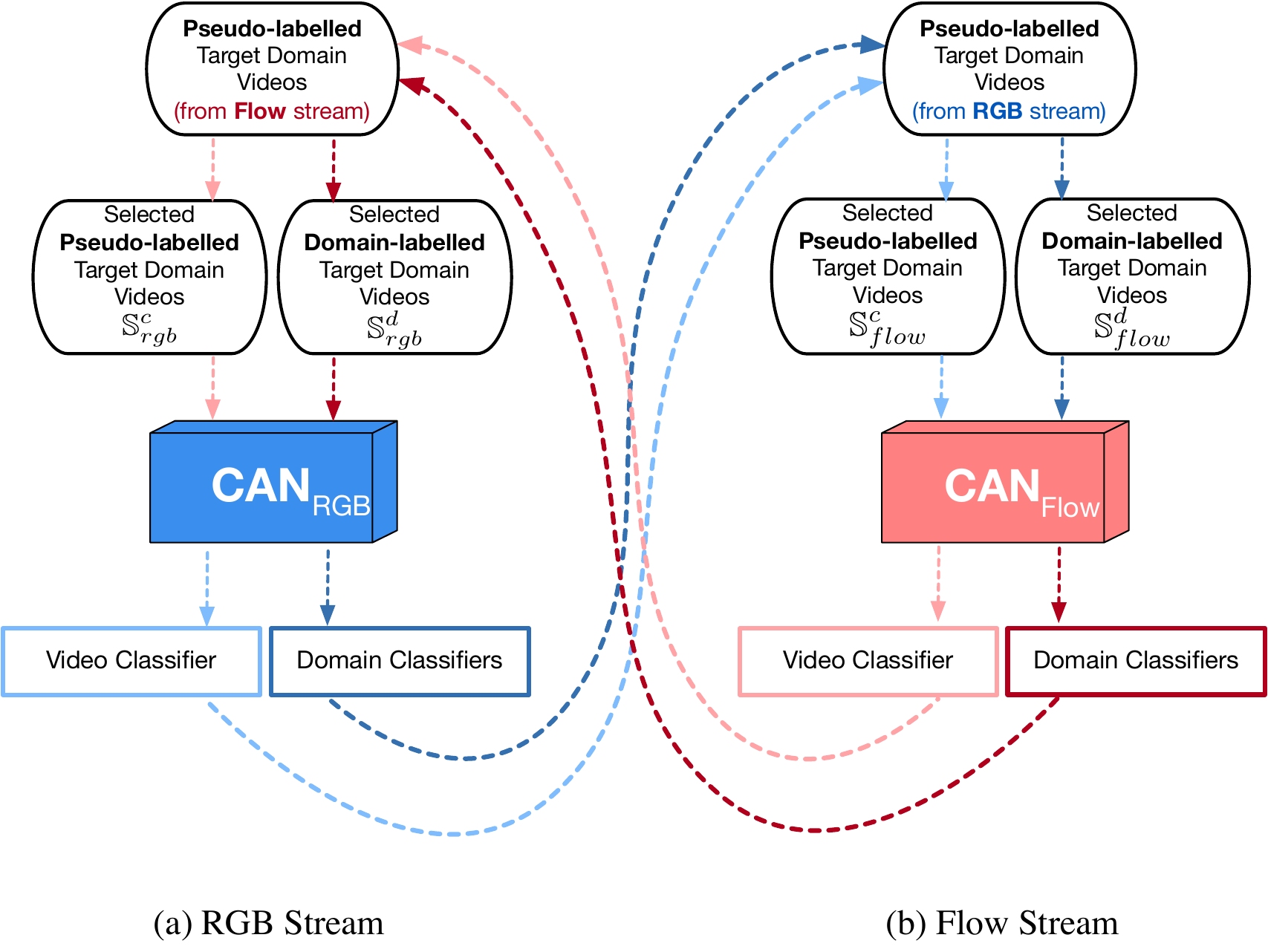}
\end{center}
\vspace{-10pt}
   \caption{Overview of the pseudo-labeled target sample selection process in our TS-SPCAN. We use the video classifier and last domain classifier of one stream (RGB/Flow) to select the target samples for another stream (Flow/RGB). $\mathbb{S}^c_{rgb}$ and $\mathbb{S}^c_{flow}$ are the selected pseudo-labelled target domain samples by using the video classifiers of the RGB and flow stream, respectively. $\mathbb{S}^d_{rgb}$ and $\mathbb{S}^d_{flow}$ are the selected target domain samples by using the last domain classifiers of the RGB and flow streams, respectively.} %$C_{rgb}$ and $C_{flow}$ are the video classifiers for the RGB and flow streams, respectively. $D_{rgb}$ and $D_{flow}$ are the domain classifiers for the RGB and flow streams, respectively.
\vspace{-2pt}
\label{fig:cospcan}
\end{figure}

For the training procedure of TS-SPCAN, we train a CAN model for the RGB stream and train another CAN model for the optical flow stream. The total loss for one stream is then:
\begin{eqnarray}
\label{eqn:cospcan}
\cL_{TS-SPCAN}^M \!= \!\!\, \cL_{src}^M + \cL_{tar}^{c, M} + \alpha(\cL_{CA}^M + \cL_{tar}^{d,M}).
\end{eqnarray}
The loss function above is very similar to the objective of SPCAN in Eqn. (\ref{eqn:spcan}). Since there are two streams, we use $M$ as the index, \ie, $M=flow$ for the optical flow stream and $M=rgb$ for the RGB stream.

For each stream $M$, we have a cross-entropy loss $\cL_{src}^M$ for the video classification task as in Eqn. (\ref{eqn:src}) and the loss $\cL_{CA}^M$ for the collaborative and adversarial loss as in Eqn. (\ref{eqn:ca}). 
For the loss functions $\cL_{tar}^{c, M}$ and $\cL_{tar}^{d, M}$ that utilize psuedo-labeled data, we also calculate the sample selection indicators and sample weights, but in a way different from SPCAN.
%using the same functions as in SPCAN, similar to the procedure in line 4 and line 5 of Algorithm \ref{algo:spcan}. 

%Formally, let us denote $p_{i}^{rgb}$ (\resp, $p_{i}^{flow}$) as the score for the $i$-th video $V_{i}$ predicted by the networks from the RGB (\resp, optical flow) stream.

Specifically, take the RGB stream as an example, the sample selection indicator $s^c_{rgb}(\cdot)$ and the sample weight $w^c_{rgb}(\cdot)$ in $\cL_{tar}^{c, rgb}$ are selected and calculated based on the video classifier from another stream (\ie, optical flow) with the same self-paced selection strategy as in (\ref{eqn:target_c}). Similarly, the sample selection indicator $s^d_{rgb}(\cdot)$ and the sample weight $w^d_{rgb}(\cdot)$ in $\cL_{tar}^{d, rgb}$ are selected and calculated based on the last domain classifier from the optical flow stream with the same strategy as in (\ref{eqn:target_d}).

%Specifically, different from (\ref{eqn:target_c}), the pseudo-labelled samples selected from one stream (\eg, RGB) are used to re-train the feature extractor and the video classifier of another stream (\eg, optical flow) in $\cL_{tar}^{c, M}$. Similarly, samples selected from one stream (\eg, RGB) are also used for re-training the feature extractor and the domain classifiers of the another stream (\eg, optical flow) in $\cL_{tar}^{d, M}$.

%and $\cL_{tar}^{d, M}$ can be defined similar to (\ref{eqn:target_c}) and (\ref{eqn:target_d}) by replacing the indicators with the cross-stream selection indicators defined as follows:
% \begin{eqnarray}
% \label{eqn:cospcan_sc1}
% \begin{aligned}
% s^c_{rgb}(V_{i}) &=\begin{cases}
% 1, & p^{flow}_i> T^c_{flow}.\\
% 0, & \text{otherwise}.
% \end{cases}\\
% s^c_{flow}(V_{i}) &=\begin{cases}
% 1, & p^{rgb}_i> T^c_{rgb}.\\
% 0, & \text{otherwise}.
% \end{cases}
% \end{aligned}
% \end{eqnarray}
% where $s^c_{rgb}(V_{i})$ and $s^c_{flow}(V_{i})$ are the sample selection indicators for the video classifiers of RGB and optical flow streams, respectively. $T^c_{rgb}$ and $T^c_{flow}$ are the sample selection thresholds %by calculating $r_c$ 
% as in SPCAN by using the classification confidence scores from the RGB stream and the optical flow stream, respectively.  The two sets of selected samples by using the two indicators for the RGB and optical flow streams are then denoted as $\mathbb{S}_{rgb}^{c}$ and $\mathbb{S}_{flow}^{c}$, respectively. The similar strategy is also used for the sample selection indicators for the domain classifiers of the two streams.

\textbf{Analysis.} In SPCAN, the sample selection indicators and sample weights calculated from one stream are used by the stream itself.
Moreover, in TS-SPCAN, inspired by co-training \cite{blum1998combining}, the selection indicators and weights calculated from one stream are used for training the networks of another stream. In this way, we exchange the selected target samples and their corresponding weights between the RGB and flow streams, which can exchange the information of the two streams so that the  deep models of the two streams help each other during the learning process.

%as follows:
%\begin{eqnarray}
%\label{eqn:target_cd2}
%\cL_{tar}^{c,m} = \frac{1}{N^t}\sum_{i=1}^{N^t}s^{c}_m(\x_i^t)w^c_m(\x_i^t) \cL_{C}(C(F(\x^t_i; \Theta_F);\c), \tilde{y}^t_i), \\
%\cL_{tar}^{d,m}= \frac{1}{N^t}\sum_{i=1}^{N^t}((s^c_m(\x_i^t)+s^d_m(\x_i^t))w^d_m(\x_i^t) \cL_{CA})
%\end{eqnarray}
%\textcolor{blue}{ ??? we did not mentioned $T^c$ before. }
% , and  $T^c_{rgb}$ and $T^c_{flow}$ are the thresholds calcualted following Eqn. (\ref{eqn:thresholdc}). 

%Similarly, we also calculate the sample indicator and weights for domain classifiers on two streams using the target domain discriminative score, and exchange them for training domain classifiers on different streams. The two sets of selected samples using the two discriminator indicators for RGB and optical flow streams are denoted as $\mathbb{S}_{rgb}^{d}$ and $\mathbb{S}_{flow}^{d}$, respectively.

The detailed exchange process for the selected pseudo-labeled target samples between the RGB stream and the optical flow stream is illustrated in Fig. (\ref{fig:cospcan}).  %take RGB stream as an example?,
For example, firstly in the % first
RGB stream, we use the video classifier and the last domain classifier of this individual stream to predict each video and use the classification confidence scores and the domain classification confidence scores of this individual stream to select pseudo-labeled target samples (\ie, the sets $\mathbb{S}_{rgb}^{c}$ and $\mathbb{S}_{rgb}^{d}$) for the optical flow stream. Then, for the %second
optical flow stream, we use the target samples from the RGB stream to re-train and improve the deep model of the optical flow stream, and use the updated model to predict each video and select target samples based on the classification confidence scores and the domain classification confidence scores. Again, the selected target samples from the optical flow stream are used to re-train the deep model of the RGB stream in the next training epoch. This selection and re-training procedure is iteratively performed until the models of the two streams converge.

\vspace{-2mm}
\section{Experiments}
\label{sec:exp}
In this section, we evaluate our proposed methods for the unsupervised domain adaptation problem under two different tasks, object recognition and video action recognition. Then, we also investigate different components of our methods in details.

% The source code of our preliminary work iCAN \cite{zhang2018collaborative} is available online\footnote{https://github.com/zhangweichen2006/iCAN}. 

\vspace{-1mm}
\subsection{Experiments for the object recognition tasks}

% \vspace{-1mm}
\subsubsection{Datasets}

For the object recognition tasks, we evaluate our methods on three benchmark datasets, Office-31, ImageCLEF-DA and VISDA-2017.

The Office-31 dataset \cite{saenko2010adapting} is an object recognition benchmark dataset for evaluating different domain adaptation approaches, which contains 4,110 images from 31 classes. It has three domains: Amazon (A), Webcam (W) and DSLR (D).

The ImageCLEF-DA dataset \cite{long2017jan} is built for the ImageCLEF 2014 domain adaptation challenge\footnote{http://imageclef.org/2014/adaptation}. It contains 4 subsets, including Caltech-256 (C), ImageNet ILSVRC 2012 (I), Bing (B) and Pascal VOC 2012 (P). Each of the subset contains 12 classes and each class has 50 images, which results in a total number of 600 images for one subset. For each dataset, we follow \cite{long2015learning, long2017jan} to utilize the common evaluation protocol under all 6 domain adaptation settings.

VISDA-2017 \cite{peng2018visda} is also a domain adaptation  challenge dataset\footnote{http://ai.bu.edu/visda-2017/}. It focuses on the domain adaptation from synthetic images to real-world images. The dataset contains two domains. The source domain is the synthetic domain (\ie the training set), which contains rendered 3D CAD model images from different angles and lightning conditions. The target domain is the real-world domain, which contains natural images cropped from the COCO dataset\cite{lin2014microsoft} (\ie the validation set). The whole dataset contains 152,397 images for the training set and 55,388 images for the validation set, which are all from 12 categories. 
%We use two settings to report the results of the VISDA-2017 validation set. The first setting (\ie, VISDA Setting 1) uses all images from the training set as the source domain for different methods. Due to a few images from the same 3D model and similar viewpoints are redundant for model training, the second setting (\ie, VISDA Setting 2) randomly uses 1000 images from each class to form a small subset of VISDA-2017, which are used as the source domain for different methods. %We observe that the results of our methods CAN and SPCAN using a small proportion of training data are already sufficient to demonstrate the effectiveness of our SPCAN, which also outperforms other state-of-the-art methods (\eg, \cite{long2017jan,sankaranarayanan2017generate,pinheiro2018unsupervised,saito2017adversarial}) by using the whole training set.

% All three datasets focus on distinct adaptation aspects. Office-31 focuses on single-to-single object adaptation with different indoor background environments. ImageCLEF-DA focuses on real world object adaptation, which might both contain multiple objects in real environments and requires to find the most important ones. VISDA-2017 is a synthetic-to-real dataset, which the training dataset contains single CAD model object images, while the validation and test datasets are both real world images. 

\vspace{-1mm}
\subsubsection{Implementation Details}
We implement our proposed methods based on the PyTorch framework, which are made publicly available at \url{https://github.com/zhangweichen2006/SPCAN}.

We use the ResNet50 model pre-trained based on the ImageNet dataset as the backbone network. %, followed by the domain discriminators and the image classifier. 
The learning rate for the discriminators and image classifier are set as 10 times of the backbone network(\ie, the feature extraction layers).
In the network, the feature extraction layers are grouped into four blocks. We add four domain discriminators after
% 10th(res2b), 22th(res3d), 40th(res4f) and 49th layer(res5c),% \wenli{it is better to specify the name of the layers instead of using ``10th"..., for example, ``res2c", ``res3c", ...} receptive fields
the 2$^{nd}$, 3$^{rd}$, 4$^{th}$ and 5$^{th}$ pooling layers (\ie, after the 10$^{th}$(res2b), the 22$^{th}$(res3d), the 40$^{th}$(res4f) and the 49$^{th}$ layer(res5c) for ResNet50). The image classifier is placed at the last layer (\ie, after the 5$^{th}$ pooling layer).
We use stochastic gradient descent(SGD) for optimization.

The learning rate of SPCAN is set as $0.0015$ initially and decreases gradually after each iteration. We use the same INV learning rate decrease strategy as in DANN \cite{ganin2016domain}. Following the setting in DANN \cite{ganin2015unsupervised, ganin2016domain}, an adaptation factor is used for controlling the learning rate of the domain discriminator. In our experiment, we set the batch size, momentum, and weight decay as 16, 0.9 and $3\times10^{-4}$, respectively. We set the trade-off parameter $\alpha=0.4$, and fix $\lambda_m=-2$ in order to have more stabilized training process for all tasks. 

In each domain adaptation task, we utilize all samples from both source and target domains. Due to different dataset sizes in different tasks, we utilize different total training epoch lengths and different number of pseudo-labeled target samples in one batch. For the training epoch length, we decide it based on the fixed total iteration for different tasks, similar to other settings in \cite{ganin2016domain,long2016unsupervised,long2017jan}. 

Similar to our preliminary iCAN \cite{zhang2018collaborative} method, for image classification, we use the same number of labelled and unlabelled samples in one batch. In one training epoch, let us denote $N^s$ and $N^p$ as the total number of source samples and the number of target samples selected by the CSS module, respectively. Let us also denote $N^t$ and $N^q$ as the total number of target samples, and the number of target samples selected by the DSS module, respectively. For convenience of presentation, we denote $\beta = \frac{N^p}{N^s+N^p}$ as the ratio of the number of target samples selected by the CSS module over the total number of source samples and target samples selected by the CSS module.  To train the image classifier and domain classifiers with all selected target samples, we balance the number of target samples selected by the CSS module and the DSS module with the number of source samples in one mini-batch. For the image classification part, when one mini-batch has eight samples, we use $8(1-\beta)$ source samples with the category label and $8\beta$ pseudo-labelled target samples selected by the CSS module to train the network. For the domain classification part, when one mini-batch has sixteen samples, we use $8$ source samples, $8\beta$ target samples selected by the CSS module, $8(1-\beta)\frac{N^q}{N^t}$ target samples selected by the DSS module, and the remaining samples are randomly selected target samples.  The domain labels for the source samples and the target samples are ``1'' and ``0'', respectively. In this way, the numbers of training samples from the source and target domains are the same, which is helpful for learning the model that is balanced for both source and target domains.

% \subsubsection{State-of-the-art Approaches}
\vspace{-1mm}
\subsubsection{Experimental Results}

We compare our method with the basic deep learning method (ResNet50) and the existing deep domain adaptation learning methods based on ResNet50. 
For the basic deep learning method, we use only source samples to fine-tune the ResNet50 \cite{he2016deep} model that is pre-trained based on the ImageNet dataset. For deep transfer learning methods, we report the results of Deep Adaptation Network (DAN) \cite{long2015learning}, Residual Transfer Network (RTN) \cite{long2016unsupervised}, Joint Adaptation Network (JAN) \cite{long2017jan}, Multi-Adversarial Domain Adaptation(MADA)\cite{pei2018multi}, Similarity Learning (SimNet)\cite{pinheiro2018unsupervised}, Generate To Adapt (GTA)\cite{sankaranarayanan2017generate}, Deep Adversarial Attention Alignment (DAAA) \cite{kang2018deep},  Adversarial Dropout Regularization (ADR)\cite{saito2017adversarial},  and our work CAN and SPCAN.
% Conditional Domain Adversarial Network (CDAN) \cite{long2018conditional}, 
In addition, we also report the results of Domain Adversarial Training of Neural Network (DANN) \cite{ganin2016domain} based on our own implementation.

% The experimental results of these models are achieved without using any model ensembling or multi-scale inference method. %\wenli{Is any of those baseline methods using ensembling or multi-scale inference? No, but self-ensembling(https://arxiv.org/pdf/1706.05208.pdf) used ensembling which gets better results for VISDA2017 dataset(82.8)}

\begin{table}
\captionsetup{justification=centering, labelsep=newline, font=sf, textfont=footnotesize}
\setlength\tabcolsep{1.7pt}\small
\begin{center}
\caption{Accuracies (\%)  of different methods on the Office-31 dataset.}
\label{table:officemain}
\vspace{-1mm}
\begin{tabular}{c c c c c c c c}
\toprule
\textbf{Methods} & A$\rightarrow$W & W$\rightarrow$A & A$\rightarrow$D & D$\rightarrow$A & W$\rightarrow$D & D$\rightarrow$W & Avg.\\
\midrule
ResNet50\cite{he2016deep} & 73.5 & 59.8 & 76.5 & 56.7 & 99.0 & 93.6 & 76.5\\
%DDC\cite{tzeng2014deep}  & 76.0 & 63.7 & 77.5 & 67.0 & 98.2 & 94.8 & 79.5\\
DAN\cite{long2015learning} & 80.5 & 62.8 & 78.6 & 63.6 & 99.6 & 97.1 & 80.4\\
RTN\cite{long2016unsupervised} & 84.5 & 64.8 & 77.5 & 66.2 & 99.4 & 96.8 & 81.6\\
DANN\cite{ganin2016domain} & 79.3 & 63.2 & 80.7 & 65.3 & 99.6 & 97.3 & 80.9\\
JAN\cite{long2017jan} & 86.0 & 70.7 & 85.1 & 69.2 & 99.7 & 96.7 & 84.6\\
MADA\cite{pei2018multi} & 90.0 & 66.4 & 87.8 & 70.3 & 99.6 & 97.4 & 85.2 \\
SimNet\cite{pinheiro2018unsupervised}& 88.6 & 71.8 & 85.3 & 73.4 & 99.7 & 98.2 & 86.2\\
GTA\cite{sankaranarayanan2017generate}& 89.5 & 71.4 & 87.7 & 72.8 & 99.8 & 97.9 & 86.5\\ 
DAAA\cite{kang2018deep}& 86.8 & 73.9 & 88.8 & 74.3 & \textbf{100.0} & \textbf{99.3} & 87.2\\
% iCAN\cite{zhang2018collaborative} & 92.5 & 69.9 & 90.1 & 72.1 & \textbf{100.0} & 98.8 & 87.2\\
CAN(Ours) & 81.5 & 63.4 & 85.5 & 65.9 & 99.7 & 98.2 & 82.4 \\
% CDAN+E\cite{long2018conditional} & \textbf{94.1} & 69.3& \textbf{92.9} & 71.0 & \textbf{100.0} & 98.6 & 87.7\\
SPCAN(Ours) & \textbf{92.4} & \textbf{74.5} & \textbf{91.2} & \textbf{77.1} & \textbf{100.0} & 99.2 & \textbf{89.1} \\ % 92.6 & \textbf{74.4} & 91.1 & \textbf{76.5} & \textbf{100.0} & 99.1 &  \textbf{89.0}
\bottomrule
\end{tabular}
\end{center}
% \vspace{-2mm}
\end{table}

\begin{table}
\captionsetup{justification=centering, labelsep=newline, font=sf, textfont=footnotesize}
\setlength\tabcolsep{3.5pt}\small
\begin{center}
\caption{Accuracies (\%) of different methods on the ImageCLEF-DA dataset.}
\label{table:clefmain}
\vspace{-1mm}
\begin{tabular}{c c c c c c c c}
\toprule
\textbf{Methods} & I$\rightarrow$P & P$\rightarrow$I & I$\rightarrow$C & C$\rightarrow$I & C$\rightarrow$P & P$\rightarrow$C & Avg.\\
\midrule
ResNet50\cite{he2016deep} & 74.6 & 82.9 & 91.2 & 79.8 & 66.8 & 86.9 & 80.4 \\
DAN\cite{long2015learning} & 74.5 & 82.2 & 92.8 & 86.3 & 69.2 & 89.8 & 82.5\\
RTN\cite{long2016unsupervised} & 74.6 & 85.8 & 94.3 & 85.9 & 71.7 & 91.2 & 83.9\\
DANN\cite{ganin2016domain} & 75.6 & 84.0 & 93.0 & 86.0 & 71.7 & 87.5 & 83.0\\
JAN\cite{long2017jan} & 76.8 & 88.0 & 94.7 & 89.7 & 74.2 & 91.7 & 85.8\\
MADA\cite{pei2018multi} & 75.0 & 87.9 & \textbf{96.0} & 88.8 & 75.2 & \textbf{92.2} & 85.8 \\
CAN(Ours) & 78.2 & 87.5 & 94.2 & 89.5 & 75.8 & 89.2 & 85.7 \\
% iCAN\cite{zhang2018collaborative} & \textbf{79.5} & 89.7 & 94.7 & 89.9 & 78.5 & 92.0 & 87.4\\
% CDAN+E\cite{long2018conditional} & 77.7 & 90.7 & \textbf{97.7} & 91.3 & 74.2 & 94.3 & 87.7\\
SPCAN(Ours) & \textbf{79.0} & \textbf{91.1} & 95.5 & \textbf{92.9} & \textbf{79.4 }& 91.3 & \textbf{88.2 }\\ %78.7 & \textbf{91.2} & 95.3 &\textbf{ 92.7} & \textbf{79.6} & 91.5 & \textbf{88.2}
\bottomrule
\end{tabular}
\end{center}
\vspace{-1mm}
\end{table}

\begin{table}[t]
\captionsetup{justification=centering, labelsep=newline, font=sf, textfont=footnotesize}
\setlength\tabcolsep{3.5pt}\small
\begin{center}
\caption{Average Accuracies (\%) of different methods on the VISDA-2017 dataset. All methods use ResNet50 as the backbone network except ADR\cite{saito2017adversarial}, which uses ResNet101.} 
%* For our methods CAN and SPCAN, we only use a small subset of the VISDA-2017 training dataset as the source training data.
\label{table:visdamain}
\vspace{-1mm}
\begin{tabular}{c c c c c c c c}
\toprule
\textbf{Methods} & VISDA-2017\\
\midrule
ResNet50\cite{he2016deep} & 50.6 \\
DAN\cite{long2015learning} & 55.0 \\
RTN\cite{long2016unsupervised} & 57.3 \\
DANN\cite{ganin2016domain} & 57.8 \\
JAN\cite{long2017jan} & 61.8 \\
GTA\cite{sankaranarayanan2017generate} & 69.5 \\
SimNet\cite{pinheiro2018unsupervised} & 69.6 \\
% CDAN+E\cite{long2018conditional} & 70.0 \\
ADR\cite{saito2017adversarial} & 73.5 \\
% \midrule
% DANN-s \cite{ganin2016domain} & 57.6 \\
CAN(Ours) & 64.1\\
% iCAN-s(Ours) & 76.9\\
SPCAN(Ours) & \textbf{79.4}\\
\bottomrule
\end{tabular}
\end{center}
% \vspace{-5mm}
\end{table}

The results on three datasets, Office-31, ImageCLEF-DA and VISDA-2017 are reported in Table~\ref{table:officemain}, Table~\ref{table:clefmain} and Table~\ref{table:visdamain}, respectively. In terms of the average accuracy, our proposed SPCAN achieves the best results on all three datasets, which demonstrates the effectiveness of our newly proposed SPCAN method for domain adaptation.  When compared with other state-of-the-art approaches on the Office-31 dataset, our method SPCAN achieves prominent improvements from the two more difficult tasks, (\ie, W$\rightarrow$A and D$\rightarrow$A). % For VISDA-2017, we observe that the result of our method SPCAN using a small proportion of training data (\ie, under VISDA Setting 2) is already sufficient to demonstrate the effectiveness of our SPCAN, which outperforms other state-of-the-art methods (\eg, \cite{long2017jan,sankaranarayanan2017generate,pinheiro2018unsupervised,saito2017adversarial}) by using the whole training set (\ie, under VISDA Setting 1).
% SPCAN also outperforms iCAN in our preliminary work \cite{zhang2018collaborative} on all three datasets, in which the average accuracies are 87.2\% on Office-31, 87.4\% on Image-CLEF and 76.9\% on VISDA-2017, respectively.

%There are 152,397 synthetic images from 12 categories in the VISDA-2017 dataset. However, a few images from the same 3D model and similar viewpoints are redundant for model training. As a result, we randomly use 1000 images from each class to form a small subset of VISDA-2017, which are used as the source training data to evaluate our methods CAN and SPCAN. We observe that the results of our methods CAN and SPCAN using a small proportion of training data are already sufficient to demonstrate the effectiveness of our SPCAN, which also outperforms other state-of-the-art methods (\eg, \cite{long2017jan,sankaranarayanan2017generate,pinheiro2018unsupervised,saito2017adversarial}) by using the whole training set.

\vspace{-1mm}
\subsubsection{Analysis on the learnt weights $\lambda_l$'s}

In our method CAN, we follow the existing method DANN \cite{ganin2016domain} and set the weight at the last layer/block as a negative value to learn domain invariant representations at higher layers/blocks. In order to empirically justify why it is useful to learn domain specific representations at lower layers/blocks, we also conduct new experiments by using different fixed weights (\eg, $\lambda_l=\frac{1}{3}$, $\lambda_l=-\frac{1}{3}$, $\lambda_l=1$, $\lambda_l=-1$, $l=1,2,3$) instead of learning the optimal weights with our method SPCAN, which is referred to as sSPCAN($\frac{1}{3}$), sSPCAN($-\frac{1}{3}$), sSPCAN($1$), and sSPCAN($-1$), respectively. The results are reported in Table~\ref{table:caanalysis}. From the results in Table~\ref{table:caanalysis}, we observe that our SPCAN outperforms sSPCAN for all tasks, which demonstrates that it is beneficial to learn the optimal weights for different blocks. For sSPCAN, we also observe that sSPCAN achieves the best results when setting $\lambda_l=\frac{1}{3},l=1,2,3$, which also indicates that it is beneficial to learn domain specific features in the shallower layers. 

\begin{table}[t]
\captionsetup{justification=centering, labelsep=newline, font=sf, textfont=footnotesize}
\setlength\tabcolsep{1.5pt}\small
\begin{center}
\caption{Accuracies (\%) of the simplified version of SPCAN (referred to as sSPCAN) by using different sets of fixed $\lambda_l$'s on the Office-31 dataset.}
\label{table:caanalysis}
% \vspace{-1mm}
\resizebox{\linewidth}{!}{%
\begin{tabular}{c c c c c c c c}
\toprule
\textbf{Methods} & A$\rightarrow$W & W$\rightarrow$A & A$\rightarrow$D & D$\rightarrow$A & W$\rightarrow$D & D$\rightarrow$W & Avg.\\
\midrule
sSPCAN$(\frac{1}{3})$ & 89.5 & 68.6 & 91.1 & 74.3 & 100.0 & 98.9 & 87.1 \\
sSPCAN$(-\frac{1}{3})$  & 89.4 & 68.8 & 90.3 & 68.8 & 99.8 & 98.4 & 85.9\\
sSPCAN$(1)$ & 89.2 & 72.2 & 87.8 & 70.4 & 99.8 & 98.3 & 86.3\\
sSPCAN$(-1)$  & 88.7 & 70.8 & 88.5 & 71.0 & 99.6 & 98.0 & 86.1\\
\midrule
SPCAN & \textbf{92.4} & \textbf{74.5} & \textbf{91.2} & \textbf{77.1} & \textbf{100.0} & \textbf{99.2} & \textbf{89.1} \\ %
\bottomrule
\end{tabular}
}
\end{center}
\end{table}

% after comparison .. automatic learn 

% we further analysis the trend of learnt lambda (bar image)

% similar observeration

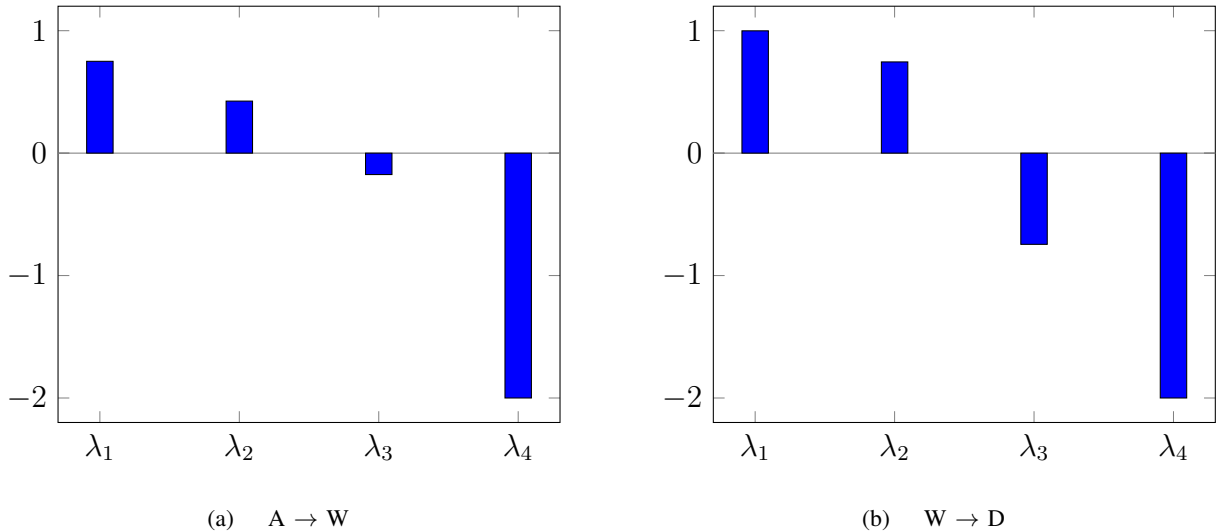
\begin{figure}[H]
    % \vspace{-2mm}
    \begin{center}
    \captionsetup{justification=centering}
    \subfigure[\quad A $\rightarrow$ W]{
    \begin{tikzpicture}
        \begin{axis}[
            width  = 0.5\linewidth,
            symbolic x coords={$\lambda_1$, $\lambda_2$, $\lambda_3$, $\lambda_4$},
            ymax=1.2,
            ymin=-2.2,
            xtick=data, extra y ticks = 0,
            extra y tick labels={},
            scaled y ticks = false,
            extra y tick style={grid=major,major grid style={thin,draw=gray}}]
            \addplot[ybar,fill=blue] coordinates {
                    ($\lambda_1$,   0.75)
                    ($\lambda_2$,  0.425)
                    ($\lambda_3$,   -0.175)
                    ($\lambda_4$,   -2)
            };
        \end{axis}
    \end{tikzpicture}
    }
    \hfill
    \subfigure[\quad W $\rightarrow$ D]{
    \begin{tikzpicture}
        \begin{axis}[
            width  = 0.5*\linewidth,
            symbolic x coords={$\lambda_1$, $\lambda_2$, $\lambda_3$, $\lambda_4$},
            ymax=1.2,
            ymin=-2.2,
            xtick=data, extra y ticks = 0,
            extra y tick labels={},
            scaled y ticks = false,
            extra y tick style={grid=major,major grid style={thin,draw=gray}}]
            \addplot[ybar,fill=blue] coordinates {
                    ($\lambda_1$,   0.999)
                    ($\lambda_2$,  0.745)
                    ($\lambda_3$,   -0.745)
                    ($\lambda_4$,   -2)
            };
        \end{axis}
    \end{tikzpicture}
    }
    % \vspace{-3mm}
    \caption{Different $\lambda_l$'s learnt by using our SPCAN on the Office-31 dataset (a) A$\rightarrow$W and (b) W$\rightarrow$D.}
    \label{fig:canbar}
    \end{center}
% \vspace{-2mm}
\end{figure}

We also observe that the learnt $\lambda_l$'s at lower blocks (\eg, $\lambda_1$) are often larger than those at higher blocks (\eg, $\lambda_3$). We take the Office-31 dataset A$\rightarrow$W and W$\rightarrow$D as two examples to illustrate $\lambda_l$'s learnt at different blocks in Figure \ref{fig:canbar}. The results show that the learnt representations can be gradually changed from domain specific representations at lower blocks to domain invariant representations at higher blocks. 

\begin{table}[t]
\captionsetup{justification=centering, labelsep=newline, font=sf, textfont=footnotesize}
\setlength\tabcolsep{1.5pt}\small
\begin{center}
\caption{Accuracy (\%) comparison between our SPCAN and the baseline method DANN when using different CNN architectures as the backbone networks on the Office-31 dataset.}
\label{table:extractor}
% \vspace{-1mm}
\resizebox{\linewidth}{!}{%
\begin{tabular}{c c c c c c c c}
\toprule
\textbf{Methods} & A$\rightarrow$W & W$\rightarrow$A & A$\rightarrow$D & D$\rightarrow$A & W$\rightarrow$D & D$\rightarrow$W & Avg.\\
\midrule
DANN(VGG19) & 75.3 & 60.7 & 77.6 & 60.1 & 99.3 & 97.6 & 78.4\\
DANN(ResNet50)  & 79.3 & 63.2 & 80.7 & 65.3 & 99.6 & 97.3 & 80.9\\
DANN(ResNet152) & 79.8 & 63.9 & 82.6 & 65.9 & 99.7 & 97.4 & 81.6\\
DANN(DenseNet161)  & 83.1 & 65.1 & 84.7 & 66.0 & 99.6 & 97.6 & 82.7\\
\midrule
SPCAN(VGG19) & 90.3 & 72.5 & 89.0 & 74.9 & 100.0 & 98.2 & 87.5 \\
SPCAN(ResNet50) & 92.4 & 74.5 & 91.2 & 77.1 & 100.0 & 99.2 & 89.1 \\
SPCAN(ResNet152) & 92.8 & 77.3 & 92.0 & 78.6 & 100.0 & 99.3 & 90.0\\
SPCAN(DenseNet161) & 93.8 & 76.8 & 93.6 & 76.9 & 100.0 & 99.3 & 90.1 \\
\bottomrule
\end{tabular}
}
\end{center}
\end{table}

\vspace{-1mm}
\subsubsection{Results using different CNN architectures}

In our proposed methods CAN and SPCAN, we use ResNet50 \cite{he2016deep} as our default backbone network. However, different CNN architectures \cite{he2016deep}, \cite{Simonyan14c}, \cite{huang2017densely} can be readily used as the backbone networks in our work. In Table~\ref{table:extractor}, we take the Office-31 dataset as an example to report more results of our method SPCAN when using VGG19 \cite{Simonyan14c}, ResNet152 \cite{he2016deep} and DenseNet161 \cite{huang2017densely} as the backbone networks. Similar to our default model using ResNet50 as the backbone network, we add four domain discriminators after the last pooling layers of the four blocks when using different CNN architectures. For comparison, the results of the baseline method DANN \cite{ganin2016domain} based on VGG19 \cite{Simonyan14c}, ResNet152 \cite{he2016deep} and DenseNet161 \cite{huang2017densely} are also reported in Table~\ref{table:extractor}. In terms of the average accuracies over all six settings on the Office-31 dataset, we observe that SPCAN outperforms DANN when using VGG19, ResNet152 and DenseNet161 as the backbone networks, which demonstrates the effectiveness and the robustness of our method SPCAN for unsupervised domain adaptation. 

\vspace{-1mm}
\begin{table}[!t]
\captionsetup{justification=centering, labelsep=newline, font=sf, textfont=footnotesize}
\setlength\tabcolsep{1pt}\small
\begin{center}
\caption{Ablation study of different variants of our SPCAN method on the Office-31 dataset.}
\label{table:moduleanalysis}
\vspace{-1mm}
\resizebox{\linewidth}{!}{%
\begin{tabular}{c c c c c c c c}
\toprule
\textbf{Methods} & A$\rightarrow$W & W$\rightarrow$A & A$\rightarrow$D & D$\rightarrow$A & W$\rightarrow$D & D$\rightarrow$W & Avg.\\
\midrule
%DANN & 79.3 & 63.2 & 80.7 & 65.3 & 99.6 & 97.3 & 80.9\\
CAN & 81.5 & 63.4 & 85.5 & 65.9 & 99.7 & 98.2 & 82.4 \\
% DANN+TSS(I) & 80.0 & 65.9 & 81.9 & 66.5 & 99.7 & 97.8 & 82.0\\
% CAN+TSS(I) & 86.6 & 67.7 & 86.8 & 69.9 & 99.8 & 98.3 & 84.9\\
% DANN+TSS(I+D)  & 85.3 & 69.6 & 86.1 & 70.7 & 99.8 & 98.3 & 85.0\\
% iCAN & 92.5 & 69.9 & 90.1 & 72.1 & 100.0 & 98.8 & 87.2\\
% T1DF+T2D,T3F & 90.3 & 73.0 & 89.3 & 75.0 & & \\
% T1DF+T3D,T2F & 91.5 & 74.2 & 88.4 & 74.1 & & & \\
% N/A & 91.9 & 73.8 & 90.5 & 76.2 & 99.9 & 98.9 & 88.5 \\
% TC & 92.0 & 74.1 & 90.6 & 76.0 & 100.0 & 99.0 & 88.6 \\
% TD1 & 91.7 & 73.0 & 88.8 & 75.6 & 99.9 & 98.9 & 88.0 \\
% TD2 & 91.6 & 72.5 & 89.3 & 73.5 & 99.8 & 98.7 & 87.6 \\
% TC$\cup$TD1 & 92.6 & 74.4 & 91.1 & 76.5 & 100.0 & 99.1 & 89.0\\
% TC$\cup$TD2 & 90.7 & 74.0 & 91.0 & 74.7 & 99.9 & 99.0 & 88.2\\
% DANN+SSC($\mathbb{S}^c$) \\
SPCAN w/o CSS & 81.8 & 63.6 & 84.9 & 66.0 & 99.8 & 98.0 & 82.4 \\
SPCAN w/o DSS & 90.3 & 73.8 & 90.2 & 75.4 & 99.9 & 98.9 & 88.1\\% 92.0 & 74.1 & 90.6 & 76.0 & 100.0 & 99.0 & 88.6 
% DANN+SSD($\mathbb{S}^d$) \\
% CAN+SSC($\mathbb{S}^c$)+SSD($\mathbb{S}^c$)\\
% SPCAN w/o SSD($\mathbb{S}^c$) & 89.8 & 74.4 & 90.0 & 76.8 &  100.0 & 99.0 & 88.3\\
% CAN+SSC($\mathbb{S}^c$)+SSD($\mathbb{S}^c+\mathbb{S}^d$)\\
SPCAN & 92.4 & 74.5 & 91.2 & 77.1 & 100.0 & 99.2 & 89.1 \\
\midrule
DANN & 79.3 & 63.2 & 80.7 & 65.3 & 99.6 & 97.3 & 80.9\\
SPDANN & 88.6 & 70.1 & 89.2 & 71.3 & 99.8 & 98.3 & 86.2\\
\bottomrule
\end{tabular}
}
\end{center}
\vspace{-2mm}
\end{table}

\vspace{-1mm}
\begin{figure*}[!t]
% \fbox{\rule{0pt}{2in} \rule{.9\linewidth}{0pt}}
\includegraphics[width=\linewidth]{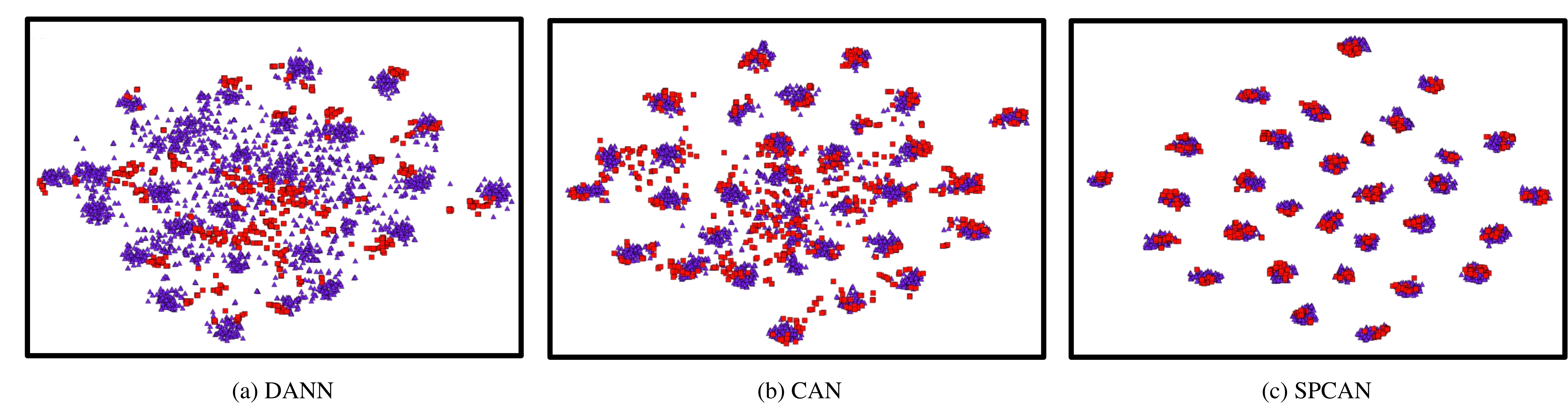}
\caption{Visualization of the source and target domain samples in the 2D space for Office-31 (A$\rightarrow$W)  by using the t-SNE embedding method \cite{maaten2008visualizing}, where the representations are learnt by using the baseline method DANN (a), our method CAN (b) and SPCAN (c), respectively. The samples from the source and the target domains are shown in blue and red colors, respectively. Best viewed in color.}
\vspace{-1mm}
\label{fig:adaptvis}
\end{figure*}

\vspace{-1mm}
\subsubsection{Ablation study}
\label{sec:modelanalysis}
% \wenli{The goal here is to validate our claims on two selection modules. Maybe you could move Table 4 in the conference paper back, and conduct the similar experiments for SPCAN. It is interesting to show 1) the sample selection form image classifier generally works for DA methods, 2) the new sample selection strategy in SPCAN is more effective than iCAN.}
% To further investigate the different design variants, especially with a focus on the importance geometric cue in the two components. We conduct further ablation studies on the two adaptation modules individually in below.
To investigate the benefits of different modules in SPCAN, in this section, we conduct ablation study on SPCAN using the Office-31 dataset. In particular, we study different variants of SPCAN, including:
(1) CAN corresponds to the proposed CAN model without using selected target samples for re-training the image classifier or the domain classifiers. 
(2) SPCAN w/o CSS is the SPCAN method without using the selected target samples for re-training the image classifier.
(3) SPCAN w/o DSS is the SPCAN method without using the selected target samples for re-training the domain classifiers.
% (4) SPCAN w/o SSD($\mathbb{S}^c$) is the (SPCAN) method that uses set $\mathbb{S}^c$ to re-train the image classifier but only uses set $\mathbb{S}^d$(without using set $\mathbb{S}^c$) to re-train the domain classifiers.
% (5) SPCAN is the model with all the components of this work.
(4) SPCAN is our final model, in which both the image classifier and the domain classifier are re-trained by using the selected target samples.
% Our sample selection strategy uses set $\mathbb{S}^c$ to re-train the image classifier and uses both set $\mathbb{S}^c$ and set $\mathbb{S}^d$ as in \ref{eqn:target_d} to re-train the domain classifiers.
(5) DANN corresponds to our re-implemented DANN model in \cite{ganin2016domain}, which does not use the selected target samples for re-training the image classifier or the domain classifiers. 
(6) SPDANN applies the same sample selection strategy as in our SPCAN model, but the selected samples are used to re-train DANN.
% , instead of our proposed CAN model.

% group is for improving the image classifier, which is referred to as SSC, and another is for improving the domain classifiers. 
% We will perform an ablation study to discuss the effectiveness of different selection modules.
% As shown in Table~\ref{table:spldomcompare}, it is able to find that TC set is important for domain classifier as well and the union of TC and TD1 set achieves the best performance, which indicates the best selection strategy we use in our SPCAN model.
% How about still separating the two selection analysis? Or SPCAN-(

% \textcolor{blue}{
The results of all variants are reported in Table~\ref{table:moduleanalysis}.
% from the average accuracies point of view, we have following observations. 
Our CAN model without using any sample selection modules achieves the average accuracy of $82.4\%$. 
Moreover, we also observe that the results of SPCAN w/o CSS achieves only similar results with the CAN model. On the other hand, if we use only the CSS module in SPCAN (\ie, SPCAN w/o DSS), the average accuracy can be improved by 5.7\% to 88.1\%.  The above results indicate that it is crucial to use image classifier to select pseudo-labeled target samples.
Finally, our SPCAN module achieves the average accuracy of 89.1\%, which demonstrates that it is beneficial to integrate the new self-paced sample selection module for improving the domain adaptation performance.

Furthermore, we also show that our self-paced sample selection module can also be applied to other adversarial training based methods to further improve their performance for domain adaptation. For example, when combining our self-paced sample selection module with DANN, the average accuracy is improved from $80.9\%$ to $86.2\%$ (see the last two rows in Table~\ref{table:moduleanalysis}). However, it is still worse than our SPCAN, which also validates the effectiveness of our collaborative and adversarial training scheme. 

\vspace{-1mm}
\subsubsection{Qualitative Analysis}

We take the Office-31 dataset (A $\rightarrow$ W) as an example to visualize the features extracted by using the baseline method DANN \cite{ganin2015unsupervised, ganin2016domain}, our methods CAN and SPCAN. For better visualization, we follow the existing domain adaptation methods \cite{long2015learning,long2017jan} to generate the 2D representations of all source and target domain samples from all 31 classes. The visualization results are shown in Figure \ref{fig:adaptvis}. When comparing our methods CAN and SPCAN with the baseline method DANN, we observe that the distributions of both source data and target data have more overlaps than DANN, and the samples from different classes are also better grouped by using CAN \cite{zhang2018collaborative} and SPCAN. The results clearly demonstrate that the features extracted by using our methods CAN \cite{zhang2018collaborative} and SPCAN are not only domain-invariant but also discriminant for the image classification task. Compared with CAN, SPCAN is able to learn more discriminant and domain-invariant features.

% We also take the Office-31 dataset (A $\rightarrow$ W) as an example to visualize the representations of all samples from both domains at the 10th layer, the 22th layer, the 40th layer and the 49th layer. As shown in Figure \ref{fig:diffblock}, the two distributions of source domain data and the target domain data have less overlaps when using the representations from higher layers, while the two distributions have more overlaps when using the representations from lower layers. These results demonstrate that our method iCAN can learn domain informative representations at lower layers and domain uninformative representations at higher layers.

\vspace{-1mm}
\subsection{Experiments for the video action recognition task}

% \vspace{-1mm}
\subsubsection{Datasets}
%For the video action recognition task, there are hardly any benchmark action recognition dataset with multiple distinct video subsets that share the same action class. 
Generally, videos from different datasets have different data distributions. As a result, we evaluate our methods % using a shared category 
on two commonly used action recognition datasets, UCF101\cite{soomro2012ucf101} and HMDB51\cite{kuehne2011hmdb}, which contain 10 common categories from both datasets.
 
The UCF101 dataset \cite{soomro2012ucf101} is a benchmark dataset for action recognition. It contains 101 action classes of 13,320 video clips. The HMDB51 dataset \cite{kuehne2011hmdb} is a large video collection from web videos and movies. It consists of 6766 annotated videos from 51 different actions. From these two datasets, we select all video clips from 10 common categories\footnote{The ten common categories are Archery(Shoot Bow), Basketball(Shoot Ball), Biking(Ride Bike), Diving, Fencing, Golf Swing(Golf), Horse Riding(Ride Horse), Pull Ups, Punch, and Push Ups.} for domain adaptation. The two subsets are referred to as UCF101-10 and HMDB51-10 in this work, which contains 1339 videos and 1172 videos, respectively. 
% , which we call the constructed dataset UCF101-HMDB51-10. 
% We call UCF101-10(U) and HMDB51-10(H) the two subsets of the selected  to perform domain adaptation. UCF101-10 contains 1339 videos and HMDB51-10 contains 1172 videos.

\vspace{-1mm}
\subsubsection{Implementation Details}
In this task, we also implement our methods in the Pytorch framework. The BN-Inception \cite{ioffe2015batch} model pre-trained based on the ImageNet dataset is used as the backbone network. 
% \wenli{Please give a reference to BN-Inception}.
The learning rate for the domain discriminators and the video classifier is set as ten times of the backbone network. We also add four domain discriminators for the BN-Inception model, after the 2$^{nd}$, 3$^{rd}$, 4$^{th}$ and 5$^{th}$ pooling layer. The video classifier is added after the 5$^{th}$ pooling layer. Stochastic gradient descent (SGD) is also used for optimization. 

We set the initial learning rate as 0.001 for the TS-SPCAN model and gradually decrease the learning rate after each iteration as in SPCAN. We set the batch size as 48 for both streams and utilize the same setting for other hyper-parameters as in our SPCAN method. 

\vspace{-1mm}
\subsubsection{Experimental Results}
The results for the video action recognition task under the domain adaptation setting between the UCF101-10 and HMDB51-10 datasets are reported in Table~\ref{table:ucf101-hmbd51-10}. We compare our proposed approach with the state-of-the-art methods, including the TSN method \cite{wang2016temporal}, which is fine-tuned from the model pre-trained based on ImageNet dataset, and our re-implemented DANN\cite{ganin2016domain}. 
The TSN network is trained by using the labeled samples from the source domain only without considering domain shift, while for DANN we further add a domain discriminator after the last pooling layer of TSN for each stream. To validate the effectiveness of our pseudo label exchanging strategy, we also include our CAN and SPCAN models for comparison. For the aforementioned methods, the prediction scores from the two streams are fused with the same weights as in TSN\cite{wang2016temporal}. 

% The TSN network is trained using source domain labeled samples only without considering domain shift, while for DANN we further add a domain discriminator on top of TSN. For all baseline methods, the BN-Inception network is used as the backbone network. Two networks are trained individually on the RGB stream and optical flow stream, and the their predictions are fused with the same weights as in TSN\cite{wang2016temporal}. 

The results are summarized in Table~\ref{table:ucf101-hmbd51-10}. We observe that all domain adaption approaches improve the classification accuracy when compared with the baseline method TSN, and our TS-SPCAN achieves the best performance. In terms of the average video action recognition accuracy, DANN improves TSN by $2.4\%$ by using the domain adversarial learning strategy, while our CAN method further improves the results by $2.7\%$, which validates the effectiveness of our collaborative and adversarial learning scheme. SPCAN outperforms CAN, which shows it is beneficial to select the pseudo-labeled target samples by using our newly proposed modules. Finally, our TS-SPCAN achieves the best average accuracy of $93.5\%$, which demonstrates the effectiveness of our TS-SPCAN by jointly integrating the complementary information from different streams to learn robust models for cross-domain video action recognition.

% We proposed two variations for our proposed method, including TS-SPCAN using selection criteria 1 and CoSPCAN-Fu using selection criteria 2. We compare the results with the state-of-the-art methods including the ImageNet finetuned BN-Inception\cite{ioffe2015batch} network, our re-implemented DANN\cite{ganin2016domain}, and our work CAN. All methods utilize the BN-Inception network as the base architecture, and use the RGB and Optical flow  stream fusion with the same weights as in TSN\cite{wang2018temporal}.

% From the result, it is able to find that from the average accuracy of the fused prediction point of view, our newly proposed SPCAN-A gains 0.6\% improvements compare to our preliminary work iCAN. The two extended methods CoSPCAN and CoSPCAN-Fu are able to provide even better domain adaptation results, which improved 1.3\% and 1.6\%, respectively.

\begin{table}
\captionsetup{justification=centering, labelsep=newline, font=sf, textfont=footnotesize}
\setlength\tabcolsep{1.7pt}\small
\begin{center}
\caption{Comparison of different methods for video action recognition on the UCF101-10 (U) and HMDB51-10 (H) datasets, in which BN-Inception \cite{ioffe2015batch} is used as the backbone network. }
\label{table:ucf101-hmbd51-10}
% \vspace{-1mm}
\begin{tabular}{c c c c c c c c}
\toprule
\textbf{Method} & U$\rightarrow$H & H$\rightarrow$U & Avg.\\
\midrule
TSN\cite{wang2016temporal} & 80.5 & 90.1 & 85.3\\
DANN\cite{ganin2016domain} & 83.6 & 91.7 & 87.7 \\
\midrule
CAN(Ours)  & 85.7 & 95.1 & 90.4 \\
% iCAN\cite{zhang2018collaborative} & 86.8 & 96.1 & 91.5 \\
SPCAN(Ours) & 87.6 & 96.6 & 92.1 \\
TS-SPCAN(Ours) & \textbf{89.4} & \textbf{97.6} & \textbf{93.5} \\
% CoSPCAN-Fu(Ours) & 89.0 & 98.3 & 93.7 \\
% CoSPCAN-ExAdd(Ours) & 89.6 & 97.7 & 93.7 \\
% CoSPCAN-FuAdd(Ours) & 90.3 & 98.8 & 94.6 \\
\bottomrule
\end{tabular}
\end{center}
\vspace{-1mm}
\end{table}

\vspace{-2mm}
\section{Conclusion}
In this paper, we have proposed a new deep learning method called Collaborative and Adversarial Network
(CAN) for unsupervised domain adaptation. Different from
the existing works that learn only domain invariant
representations through domain adversarial learning, our
method CAN additionally learns domain specific representations through domain collaborative learning. To effectively exploit the unlabeled target samples, we have further proposed a Self-Paced CAN (SPCAN) method, in which a self-paced learning scheme is developed to iteratively select pseudo-labeled target samples, and enlarge the training set for learning a more robust model. Observing that the two-stream framework is commonly used for video action recognition, we have also developed the unsupervised domain adaptation method Two-stream SPCAN (TS-SPCAN) for video action recognition, in which we further integrate the complementary information from different views (\ie, RGB and Flow clues) by using the selected pseudo-labeled samples from one view to help the model training process on the other view. Extensive experiments on multiple benchmark datasets have demonstrated the effectiveness of our proposed methods.

% use section* for acknowledgment
% \ifCLASSOPTIONcompsoc
%   % The Computer Society usually uses the plural form
%   \section*{Acknowledgments}
%   This work was supported by the Australian Research Council Future Fellowship under Grant FT180100116.
% \else
  % regular IEEE prefers the singular form
\vspace{-2mm}
\section*{Acknowledgment}
This work was supported by the Australian Research Council (ARC) Future Fellowship under Grant FT180100116 and ARC DP200103223.

% The authors would like to thank...

% Can use something like this to put references on a page
% by themselves when using endfloat and the captionsoff option.
\ifCLASSOPTIONcaptionsoff
  \newpage
\fi
\vspace{-2mm}
{\small
\bibliographystyle{IEEEtran}
\bibliography{main}

% Generated by IEEEtran.bst, version: 1.14 (2015/08/26)
\begin{thebibliography}{10}
\providecommand{\url}[1]{#1}
\csname url@samestyle\endcsname
\providecommand{\newblock}{\relax}
\providecommand{\bibinfo}[2]{#2}
\providecommand{\BIBentrySTDinterwordspacing}{\spaceskip=0pt\relax}
\providecommand{\BIBentryALTinterwordstretchfactor}{4}
\providecommand{\BIBentryALTinterwordspacing}{\spaceskip=\fontdimen2\font plus
\BIBentryALTinterwordstretchfactor\fontdimen3\font minus
  \fontdimen4\font\relax}
\providecommand{\BIBforeignlanguage}[2]{{%
\expandafter\ifx\csname l@#1\endcsname\relax
\typeout{** WARNING: IEEEtran.bst: No hyphenation pattern has been}%
\typeout{** loaded for the language `#1'. Using the pattern for}%
\typeout{** the default language instead.}%
\else
\language=\csname l@#1\endcsname
\fi
#2}}
\providecommand{\BIBdecl}{\relax}
\BIBdecl

\bibitem{duan2012domainkernel}
L.~Duan, I.~W. Tsang, and D.~Xu, ``Domain transfer multiple kernel learning,''
  \emph{T-PAMI}, vol.~34, no.~3, pp. 465--479, 2012.

\bibitem{baktashmotlagh2013unsupervised}
M.~Baktashmotlagh, M.~T. Harandi, B.~C. Lovell, and M.~Salzmann, ``Unsupervised
  domain adaptation by domain invariant projection,'' in \emph{ICCV}, 2013, pp.
  769--776.

\bibitem{duan2012domain}
L.~Duan, D.~Xu, and I.~W.-H. Tsang, ``Domain adaptation from multiple sources:
  A domain-dependent regularization approach,'' \emph{T-NNLS}, vol.~23, no.~3,
  pp. 504--518, 2012.

\bibitem{fernando2013unsupervised}
B.~Fernando, A.~Habrard, M.~Sebban, and T.~Tuytelaars, ``Unsupervised visual
  domain adaptation using subspace alignment,'' in \emph{CVPR}, 2013, pp.
  2960--2967.

\bibitem{duan2012visual}
L.~Duan, D.~Xu, I.~W.-H. Tsang, and J.~Luo, ``Visual event recognition in
  videos by learning from web data,'' \emph{T-PAMI}, vol.~34, no.~9, pp.
  1667--1680, 2012.

\bibitem{li2014learning}
W.~Li, L.~Duan, D.~Xu, and I.~W. Tsang, ``Learning with augmented features for
  supervised and semi-supervised heterogeneous domain adaptation,''
  \emph{T-PAMI}, vol.~36, no.~6, pp. 1134--1148, 2014.

\bibitem{li2018visual}
W.~Li, L.~Chen, D.~Xu, and L.~Van~Gool, ``Visual recognition in rgb images and
  videos by learning from rgb-d data,'' \emph{T-PAMI}, vol.~40, no.~8, pp.
  2030--2036, 2018.

\bibitem{szegedy2015going}
C.~Szegedy, W.~Liu, Y.~Jia, P.~Sermanet, S.~Reed, D.~Anguelov, D.~Erhan,
  V.~Vanhoucke, and A.~Rabinovich, ``Going deeper with convolutions,'' in
  \emph{CVPR}, 2015, pp. 1--9.

\bibitem{he2016deep}
K.~He, X.~Zhang, S.~Ren, and J.~Sun, ``Deep residual learning for image
  recognition,'' in \emph{CVPR}, 2016, pp. 770--778.

\bibitem{long2015fully}
J.~Long, E.~Shelhamer, and T.~Darrell, ``Fully convolutional networks for
  semantic segmentation,'' in \emph{CVPR}, 2015, pp. 3431--3440.

\bibitem{simonyan2014two}
K.~Simonyan and A.~Zisserman, ``Two-stream convolutional networks for action
  recognition in videos,'' in \emph{NeurIPS}, 2014, pp. 568--576.

\bibitem{he2017mask}
K.~He, G.~Gkioxari, P.~Doll{\'a}r, and R.~Girshick, ``Mask r-cnn,'' in
  \emph{ICCV}, 2017, pp. 2961--2969.

\bibitem{hu2015deep}
J.~Hu, J.~Lu, and Y.-P. Tan, ``Deep transfer metric learning,'' in \emph{CVPR},
  2015, pp. 325--333.

\bibitem{long2015learning}
M.~Long, Y.~Cao, J.~Wang, and M.~Jordan, ``Learning transferable features with
  deep adaptation networks,'' in \emph{ICML}, 2015, pp. 97--105.

\bibitem{long2016unsupervised}
M.~Long, H.~Zhu, J.~Wang, and M.~I. Jordan, ``Unsupervised domain adaptation
  with residual transfer networks,'' in \emph{NeurIPS}, 2016, pp. 136--144.

\bibitem{long2017jan}
------, ``Deep transfer learning with joint adaptation networks,'' in
  \emph{ICML}, 2017, pp. 2208--2217.

\bibitem{ganin2015unsupervised}
Y.~Ganin and V.~Lempitsky, ``Unsupervised domain adaptation by
  backpropagation,'' in \emph{ICML}, 2015, pp. 1180--1189.

\bibitem{ganin2016domain}
Y.~Ganin, E.~Ustinova, H.~Ajakan, P.~Germain, H.~Larochelle, F.~Laviolette,
  M.~Marchand, and V.~Lempitsky, ``Domain-adversarial training of neural
  networks,'' \emph{JMLR}, vol.~17, no.~59, pp. 1--35, 2016.

\bibitem{tzeng2017adversarial}
E.~Tzeng, J.~Hoffman, K.~Saenko, and T.~Darrell, ``Adversarial discriminative
  domain adaptation,'' in \emph{CVPR}, 2017, pp. 2962--2971.

\bibitem{zeng2014deep}
X.~Zeng, W.~Ouyang, M.~Wang, and X.~Wang, ``Deep learning of scene-specific
  classifier for pedestrian detection,'' in \emph{ECCV}.\hskip 1em plus 0.5em
  minus 0.4em\relax Springer, 2014, pp. 472--487.

\bibitem{sun2016deep}
B.~Sun and K.~Saenko, ``Deep coral: Correlation alignment for deep domain
  adaptation,'' in \emph{ECCV}, 2016, pp. 443--450.

\bibitem{bousmalis2016domain}
K.~Bousmalis, G.~Trigeorgis, N.~Silberman, D.~Krishnan, and D.~Erhan, ``Domain
  separation networks,'' in \emph{NeurIPS}, 2016, pp. 343--351.

\bibitem{liu2016coupled}
M.-Y. Liu and O.~Tuzel, ``Coupled generative adversarial networks,'' in
  \emph{NeurIPS}, 2016, pp. 469--477.

\bibitem{kang2018deep}
G.~Kang, L.~Zheng, Y.~Yan, and Y.~Yang, ``Deep adversarial attention alignment
  for unsupervised domain adaptation: the benefit of target expectation
  maximization,'' in \emph{ECCV}, 2018.

\bibitem{long2018conditional}
M.~Long, Z.~Cao, J.~Wang, and M.~I. Jordan, ``Conditional adversarial domain
  adaptation,'' in \emph{NeurIPS}, 2018, pp. 1647--1657.

\bibitem{gretton2012kernel}
A.~Gretton, K.~M. Borgwardt, M.~J. Rasch, B.~Sch{\"o}lkopf, and A.~Smola, ``A
  kernel two-sample test,'' \emph{JMLR}, vol.~13, no. Mar, pp. 723--773, 2012.

\bibitem{zhang2019recent}
J.~Zhang, W.~Li, P.~Ogunbona, and D.~Xu, ``Recent advances in transfer learning
  for cross-dataset visual recognition: A problem-oriented perspective,''
  \emph{ACM Computing Surveys (CSUR)}, vol.~52, no.~1, p.~7, 2019.

\bibitem{kumar2010self}
M.~P. Kumar, B.~Packer, and D.~Koller, ``Self-paced learning for latent
  variable models,'' in \emph{NeurIPS}, 2010, pp. 1189--1197.

\bibitem{wang2016temporal}
L.~Wang, Y.~Xiong, Z.~Wang, Y.~Qiao, D.~Lin, X.~Tang, and L.~Van~Gool,
  ``Temporal segment networks: Towards good practices for deep action
  recognition,'' in \emph{ECCV}.\hskip 1em plus 0.5em minus 0.4em\relax
  Springer, 2016, pp. 20--36.

\bibitem{zhang2018collaborative}
W.~Zhang, W.~Ouyang, W.~Li, and D.~Xu, ``Collaborative and adversarial network
  for unsupervised domain adaptation,'' in \emph{CVPR}, 2018, pp. 3801--3809.

\bibitem{kulis2011}
B.~Kulis, K.~Saenko, and T.~Darrell, ``What you saw is not what you get: Domain
  adaptation using asymmetric kernel transforms,'' in \emph{CVPR}, 2011, pp.
  1785--1792.

\bibitem{gopalan2011domain}
R.~Gopalan, R.~Li, and R.~Chellappa, ``Domain adaptation for object
  recognition: An unsupervised approach,'' in \emph{ICCV}.\hskip 1em plus 0.5em
  minus 0.4em\relax IEEE, 2011, pp. 999--1006.

\bibitem{gong2012geodesic}
B.~Gong, Y.~Shi, F.~Sha, and K.~Grauman, ``Geodesic flow kernel for
  unsupervised domain adaptation,'' in \emph{CVPR}.\hskip 1em plus 0.5em minus
  0.4em\relax IEEE, 2012, pp. 2066--2073.

\bibitem{sugiyama2008direct}
M.~Sugiyama, S.~Nakajima, H.~Kashima, P.~V. Buenau, and M.~Kawanabe, ``Direct
  importance estimation with model selection and its application to covariate
  shift adaptation,'' in \emph{NeurIPS}, 2008, pp. 1433--1440.

\bibitem{huang2007correcting}
J.~Huang, A.~Gretton, K.~M. Borgwardt, B.~Sch{\"o}lkopf, and A.~J. Smola,
  ``Correcting sample selection bias by unlabeled data,'' in \emph{NeurIPS},
  2007, pp. 601--608.

\bibitem{bruzzone2010domain}
L.~Bruzzone and M.~Marconcini, ``Domain adaptation problems: A {DASVM}
  classification technique and a circular validation strategy,'' \emph{T-PAMI},
  vol.~32, no.~5, pp. 770--787, 2010.

\bibitem{bickel2007discriminative}
S.~Bickel, M.~Br{\"u}ckner, and T.~Scheffer, ``Discriminative learning for
  differing training and test distributions,'' in \emph{ICML}.\hskip 1em plus
  0.5em minus 0.4em\relax ACM, 2007, pp. 81--88.

\bibitem{li2018domain}
W.~Li, Z.~Xu, D.~Xu, D.~Dai, and L.~Van~Gool, ``Domain generalization and
  adaptation using low rank exemplar {SVM}s,'' \emph{T-PAMI}, vol.~40, no.~5,
  pp. 1114--1127, 2018.

\bibitem{mancini2018boosting}
M.~Mancini, L.~Porzi, S.~R. Bul{\`o}, B.~Caputo, and E.~Ricci, ``Boosting
  domain adaptation by discovering latent domains,'' in \emph{CVPR}, 2018.

\bibitem{carlucci2017autodial}
F.~M. Carlucci, L.~Porzi, B.~Caputo, E.~Ricci, and S.~R. Bul{\`o}, ``Autodial:
  Automatic domain alignment layers.'' in \emph{ICCV}, 2017, pp. 5077--5085.

\bibitem{pinheiro2018unsupervised}
P.~O. Pinheiro and A.~Element, ``Unsupervised domain adaptation with similarity
  learning,'' in \emph{CVPR}, 2018.

\bibitem{tzeng2015simultaneous}
E.~Tzeng, J.~Hoffman, T.~Darrell, and K.~Saenko, ``Simultaneous deep transfer
  across domains and tasks,'' in \emph{ICCV}.\hskip 1em plus 0.5em minus
  0.4em\relax IEEE, 2015, pp. 4068--4076.

\bibitem{sankaranarayanan2017generate}
S.~Sankaranarayanan, Y.~Balaji, C.~D. Castillo, and R.~Chellappa, ``Generate to
  adapt: Aligning domains using generative adversarial networks,'' in
  \emph{CVPR}, 2018.

\bibitem{hu2018duplex}
L.~Hu, M.~Kan, S.~Shan, and X.~Chen, ``Duplex generative adversarial network
  for unsupervised domain adaptation,'' in \emph{CVPR}, 2018, pp. 1498--1507.

\bibitem{russo2017source}
P.~Russo, F.~M. Carlucci, T.~Tommasi, and B.~Caputo, ``From source to target
  and back: symmetric bi-directional adaptive gan,'' in \emph{CVPR}, 2018.

\bibitem{hoffman2017cycada}
J.~Hoffman, E.~Tzeng, T.~Park, J.-Y. Zhu, P.~Isola, K.~Saenko, A.~A. Efros, and
  T.~Darrell, ``Cycada: Cycle-consistent adversarial domain adaptation,'' in
  \emph{ICML}, 2018.

\bibitem{lee2018diverse}
H.-Y. Lee, H.-Y. Tseng, J.-B. Huang, M.~K. Singh, and M.-H. Yang, ``Diverse
  image-to-image translation via disentangled representations,'' in
  \emph{ECCV}, 2018.

\bibitem{saito2018maximum}
K.~Saito, K.~Watanabe, Y.~Ushiku, and T.~Harada, ``Maximum classifier
  discrepancy for unsupervised domain adaptation,'' in \emph{CVPR}, 2018.

\bibitem{rozantsev2018residual}
A.~Rozantsev, M.~Salzmann, and P.~Fua, ``Residual parameter transfer for deep
  domain adaptation,'' in \emph{CVPR}, 2018, pp. 4339--4348.

\bibitem{rozantsev2018beyond}
------, ``Beyond sharing weights for deep domain adaptation,'' \emph{T-PAMI},
  2018.

\bibitem{goodfellow2014generative}
I.~Goodfellow, J.~Pouget-Abadie, M.~Mirza, B.~Xu, D.~Warde-Farley, S.~Ozair,
  A.~Courville, and Y.~Bengio, ``Generative adversarial nets,'' in
  \emph{NeurIPS}, 2014, pp. 2672--2680.

\bibitem{ben2010theory}
S.~Ben-David, J.~Blitzer, K.~Crammer, A.~Kulesza, F.~Pereira, and J.~W.
  Vaughan, ``A theory of learning from different domains,'' \emph{Machine
  learning}, vol.~79, no.~1, pp. 151--175, 2010.

\bibitem{gulrajani2017improved}
I.~Gulrajani, F.~Ahmed, M.~Arjovsky, V.~Dumoulin, and A.~C. Courville,
  ``Improved training of wasserstein gans,'' in \emph{NeurIPS}, 2017, pp.
  5767--5777.

\bibitem{chen2011co}
M.~Chen, K.~Q. Weinberger, and J.~Blitzer, ``Co-training for domain
  adaptation,'' in \emph{NeurIPS}, 2011, pp. 2456--2464.

\bibitem{blum1998combining}
A.~Blum and T.~Mitchell, ``Combining labeled and unlabeled data with
  co-training,'' in \emph{Proceedings of the eleventh annual conference on
  Computational learning theory}.\hskip 1em plus 0.5em minus 0.4em\relax ACM,
  1998, pp. 92--100.

\bibitem{saito2017asymmetric}
K.~Saito, Y.~Ushiku, and T.~Harada, ``Asymmetric tri-training for unsupervised
  domain adaptation,'' in \emph{ICML}.\hskip 1em plus 0.5em minus 0.4em\relax
  JMLR.org, 2017, pp. 2988--2997.

\bibitem{bengio2009curriculum}
Y.~Bengio, J.~Louradour, R.~Collobert, and J.~Weston, ``Curriculum learning,''
  in \emph{ICML}.\hskip 1em plus 0.5em minus 0.4em\relax ACM, 2009, pp. 41--48.

\bibitem{tang2012shifting}
K.~Tang, V.~Ramanathan, L.~Fei-Fei, and D.~Koller, ``Shifting weights: Adapting
  object detectors from image to video,'' in \emph{NeurIPS}, 2012, pp.
  638--646.

\bibitem{zhang2017curriculum}
Y.~Zhang, P.~David, and B.~Gong, ``Curriculum domain adaptation for semantic
  segmentation of urban scenes,'' in \emph{ICCV}, 2017, pp. 2020--2030.

\bibitem{zou2018unsupervised}
Y.~Zou, Z.~Yu, B.~Vijaya~Kumar, and J.~Wang, ``Unsupervised domain adaptation
  for semantic segmentation via class-balanced self-training,'' in \emph{ECCV},
  2018, pp. 289--305.

\bibitem{zhang2011multi}
D.~Zhang, J.~He, Y.~Liu, L.~Si, and R.~Lawrence, ``Multi-view transfer learning
  with a large margin approach,'' in \emph{SIGKDD}.\hskip 1em plus 0.5em minus
  0.4em\relax ACM, 2011, pp. 1208--1216.

\bibitem{yang2013multi}
P.~Yang and W.~Gao, ``Multi-view discriminant transfer learning,'' in
  \emph{IJCAI}, 2013.

\bibitem{niu2015multi}
L.~Niu, W.~Li, and D.~Xu, ``Multi-view domain generalization for visual
  recognition,'' in \emph{ICCV}, 2015, pp. 4193--4201.

\bibitem{ding2018robust}
Z.~Ding, M.~Shao, and Y.~Fu, ``Robust multi-view representation: a unified
  perspective from multi-view learning to domain adaption,'' in
  \emph{IJCAI}.\hskip 1em plus 0.5em minus 0.4em\relax AAAI Press, 2018, pp.
  5434--5440.

\bibitem{wang2013action}
H.~Wang and C.~Schmid, ``Action recognition with improved trajectories,'' in
  \emph{ICCV}, 2013, pp. 3551--3558.

\bibitem{ji20133d}
S.~Ji, W.~Xu, M.~Yang, and K.~Yu, ``{3D} convolutional neural networks for
  human action recognition,'' \emph{T-PAMI}, vol.~35, no.~1, pp. 221--231,
  2013.

\bibitem{karpathy2014large}
A.~Karpathy, G.~Toderici, S.~Shetty, T.~Leung, R.~Sukthankar, and L.~Fei-Fei,
  ``Large-scale video classification with convolutional neural networks,'' in
  \emph{CVPR}, 2014, pp. 1725--1732.

\bibitem{yue2015beyond}
J.~Yue-Hei~Ng, M.~Hausknecht, S.~Vijayanarasimhan, O.~Vinyals, R.~Monga, and
  G.~Toderici, ``Beyond short snippets: Deep networks for video
  classification,'' in \emph{CVPR}, 2015, pp. 4694--4702.

\bibitem{wang2015action}
L.~Wang, Y.~Qiao, and X.~Tang, ``Action recognition with trajectory-pooled
  deep-convolutional descriptors,'' in \emph{CVPR}, 2015, pp. 4305--4314.

\bibitem{tran2015learning}
D.~Tran, L.~Bourdev, R.~Fergus, L.~Torresani, and M.~Paluri, ``Learning
  spatiotemporal features with 3d convolutional networks,'' in \emph{ICCV},
  2015, pp. 4489--4497.

\bibitem{feichtenhofer2016convolutional}
C.~Feichtenhofer, A.~Pinz, and A.~Zisserman, ``Convolutional two-stream network
  fusion for video action recognition,'' in \emph{CVPR}, 2016, pp. 1933--1941.

\bibitem{niu2016exploiting}
L.~Niu, W.~Li, and D.~Xu, ``Exploiting privileged information from web data for
  action and event recognition,'' \emph{IJCV}, vol. 118, no.~2, pp. 130--150,
  2016.

\bibitem{yu2018exploiting}
F.~Yu, X.~Wu, Y.~Sun, and L.~Duan, ``Exploiting images for video recognition
  with hierarchical generative adversarial networks,'' in \emph{IJCAI}, 2018.

\bibitem{Jamal2018ddaa}
A.~Jamal, P.~V. Namboodiri, D.~Deodhare1, and K.~Venkatesh, ``Deep domain
  adaptation in action space,'' in \emph{BMVC}, 2018.

\bibitem{busto2018open}
P.~P. Busto, A.~Iqbal, and J.~Gall, ``Open set domain adaptation for image and
  action recognition,'' \emph{T-PAMI}, 2018.

\bibitem{ben2007analysis}
S.~Ben-David, J.~Blitzer, K.~Crammer, and F.~Pereira, ``Analysis of
  representations for domain adaptation,'' in \emph{Advances in neural
  information processing systems}, 2007, pp. 137--144.

\bibitem{bazaraa2013nonlinear}
M.~S. Bazaraa, H.~D. Sherali, and C.~M. Shetty, \emph{Nonlinear programming:
  theory and algorithms}.\hskip 1em plus 0.5em minus 0.4em\relax John Wiley \&
  Sons, 2013.

\bibitem{saenko2010adapting}
K.~Saenko, B.~Kulis, M.~Fritz, and T.~Darrell, ``Adapting visual category
  models to new domains,'' in \emph{ECCV}, 2010, pp. 213--226.

\bibitem{peng2018visda}
X.~Peng, B.~Usman, N.~Kaushik, D.~Wang, J.~Hoffman, K.~Saenko, X.~Roynard,
  J.-E. Deschaud, F.~Goulette, T.~L. Hayes \emph{et~al.}, ``Visda: A
  synthetic-to-real benchmark for visual domain adaptation,'' in \emph{CVPR-W},
  2018, pp. 2021--2026.

\bibitem{lin2014microsoft}
T.-Y. Lin, M.~Maire, S.~Belongie, J.~Hays, P.~Perona, D.~Ramanan,
  P.~Doll{\'a}r, and C.~L. Zitnick, ``Microsoft coco: Common objects in
  context,'' in \emph{ECCV}.\hskip 1em plus 0.5em minus 0.4em\relax Springer,
  2014, pp. 740--755.

\bibitem{pei2018multi}
Z.~Pei, Z.~Cao, M.~Long, and J.~Wang, ``Multi-adversarial domain adaptation,''
  in \emph{AAAI}, 2018.

\bibitem{saito2017adversarial}
K.~Saito, Y.~Ushiku, T.~Harada, and K.~Saenko, ``Adversarial dropout
  regularization,'' in \emph{ICLR}, 2018.

\bibitem{Simonyan14c}
K.~Simonyan and A.~Zisserman, ``Very deep convolutional networks for
  large-scale image recognition,'' \emph{CoRR}, vol. abs/1409.1556, 2014.

\bibitem{huang2017densely}
G.~Huang, Z.~Liu, L.~van~der Maaten, and K.~Q. Weinberger, ``Densely connected
  convolutional networks,'' in \emph{CVPR}, 2017.

\bibitem{maaten2008visualizing}
L.~v.~d. Maaten and G.~Hinton, ``Visualizing data using t-sne,'' \emph{JMLR},
  vol.~9, no. Nov, pp. 2579--2605, 2008.

\bibitem{soomro2012ucf101}
K.~Soomro, A.~R. Zamir, and M.~Shah, ``{UCF101}: A dataset of 101 human actions
  classes from videos in the wild,'' \emph{arXiv preprint arXiv:1212.0402},
  2012.

\bibitem{kuehne2011hmdb}
H.~Kuehne, H.~Jhuang, E.~Garrote, T.~Poggio, and T.~Serre, ``{HMDB}: a large
  video database for human motion recognition,'' in \emph{ICCV}.\hskip 1em plus
  0.5em minus 0.4em\relax IEEE, 2011, pp. 2556--2563.

\bibitem{ioffe2015batch}
S.~Ioffe and C.~Szegedy, ``Batch normalization: accelerating deep network
  training by reducing internal covariate shift,'' in \emph{ICML}, 2015, pp.
  448--456.

\end{thebibliography}
}
\vspace{-10mm}
\begin{IEEEbiography}
[{\includegraphics[width=1in,height=1.25in,clip,keepaspectratio]{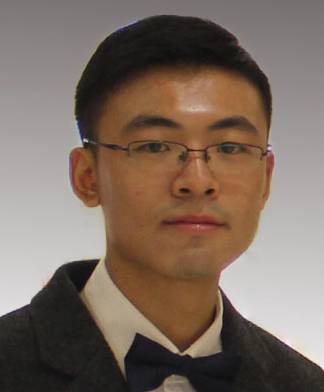}}]
{Weichen Zhang} received the BIT degree in School of Information Technology from the University of Sydney in 2017. He is currently working toward the PhD degree in the School of Electrical and Information Engineering, the University of Sydney. His current research interests include deep transfer learning and its applications in computer vision.
\end{IEEEbiography}

\vspace{-10mm}
\begin{IEEEbiography}
[{\includegraphics[width=1in,height=1.25in,clip,keepaspectratio]{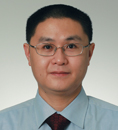}}]
{Dong Xu} received the BE and PhD degrees from University of Science and Technology of China, in 2001 and 2005, respectively. He was a post-doctoral research scientist with Columbia University, New York, NY, for one year, and also worked as a faculty member for more than eight years with Nanyang Technological University, Singapore. Currently, he is a professor (Chair in Computer Engineering) with the School of Electrical and Information Engineering, the University of Sydney, Australia. His current research interests include computer vision, statistical learning, and multimedia content analysis. He is a fellow of IEEE and IAPR.
\end{IEEEbiography}

\vspace{-10mm}
\begin{IEEEbiography}
[{\includegraphics[width=1in,height=1.25in,clip,keepaspectratio]{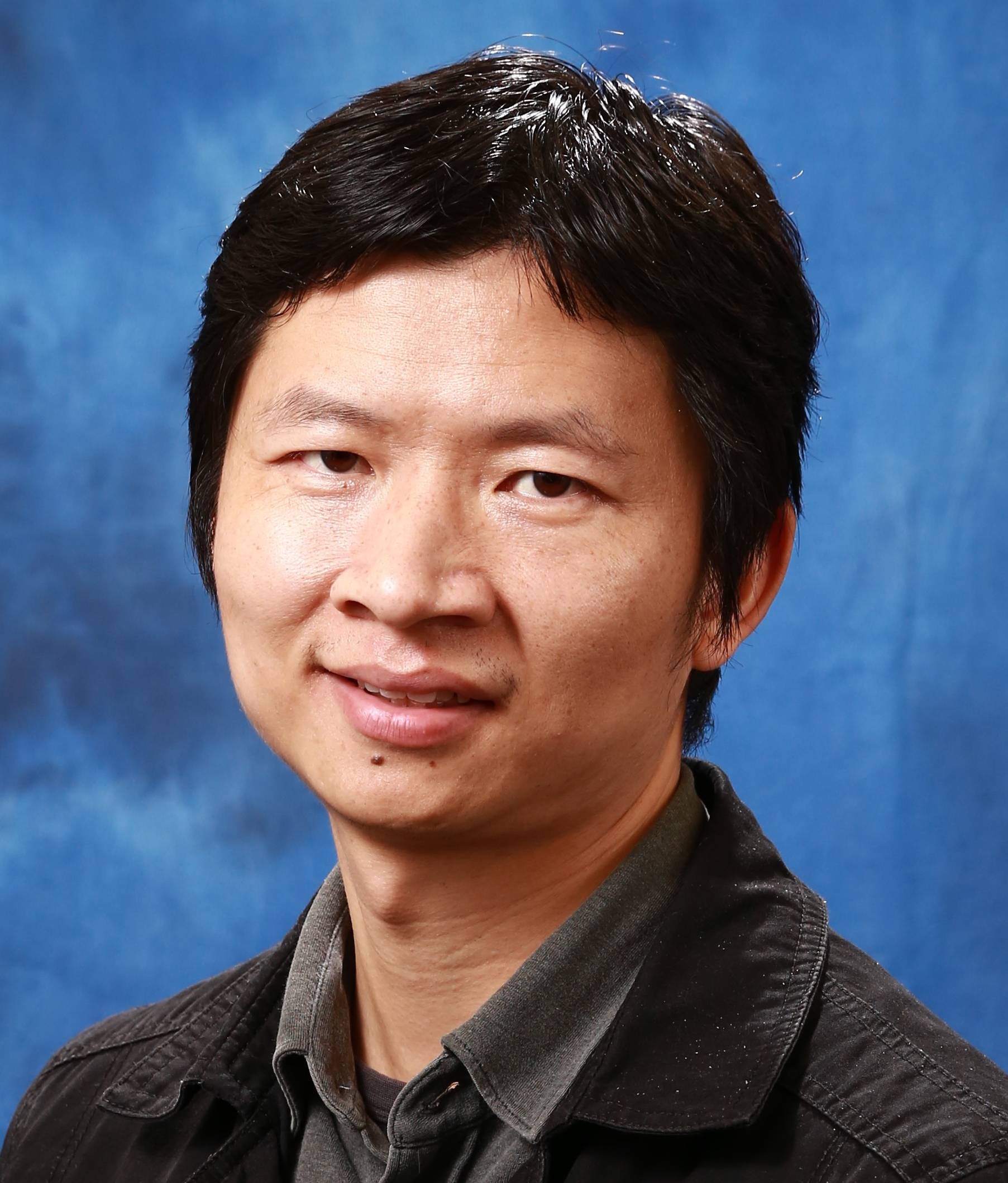}}]
{Wanli Ouyang} received the PhD degree in the Department of Electronic Engineering, Chinese University of Hong Kong. Since 2017, he is a senior lecturer with the University of Sydney. His research interests include image processing, computer vision, and pattern recognition. He is a senior member of the IEEE.
\end{IEEEbiography}

\vspace{-10mm}
\begin{IEEEbiography}[{\includegraphics[width=1in,height=1.25in,clip,keepaspectratio]{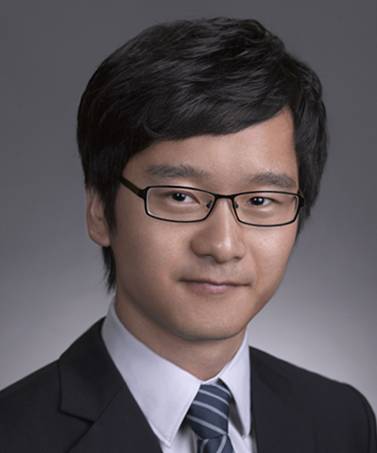}}]
{Wen Li} is a postdoctoral researcher with the Computer Vision Laboratory, ETH Zurich, Switzerland. He received the Ph.D. degree from the Nanyang Technological University, Singapore in 2015. Before that, he received the B.S. and M.Eng degrees from the Beijing Normal University, Beijing, China, in 2007 and 2010, respectively. His main interests include transfer learning, multi-view learning, multiple kernel learning, and their applications in computer vision.
\end{IEEEbiography}

% if you will not have a photo at all:
% \begin{IEEEbiographynophoto}{John Doe}
% Biography text here.
% \end{IEEEbiographynophoto}

% insert where needed to balance the two columns on the last page with
% biographies
%\newpage

% \begin{IEEEbiographynophoto}{Jane Doe}
% Biography text here.
% \end{IEEEbiographynophoto}

% You can push biographies down or up by placing
% a \vfill before or after them. The appropriate
% use of \vfill depends on what kind of text is
% on the last page and whether or not the columns
% are being equalized.

%\vfill

% Can be used to pull up biographies so that the bottom of the last one
% is flush with the other column.
%\enlargethispage{-5in}

% that's all folks
\end{document}